\documentclass[10pt]{article} % For LaTeX2e

\PassOptionsToPackage{table}{xcolor}
\usepackage[accepted]{rlj}
\usepackage{graphicx} % Required for inserting images

\usepackage{amsmath,amsfonts,bm}

\def\eqref#1{equation~\ref{#1}}
\def\1{\bm{1}}

\DeclareMathAlphabet{\mathsfit}{\encodingdefault}{\sfdefault}{m}{sl}
\SetMathAlphabet{\mathsfit}{bold}{\encodingdefault}{\sfdefault}{bx}{n}

\DeclareMathOperator*{\argmax}{arg\,max}

\usepackage{url}            % simple URL typesetting
\usepackage{booktabs}       % professional-quality tables
\usepackage{amsfonts}       % blackboard math symbols
\usepackage{nicefrac}       % compact symbols for 1/2, etc.
\usepackage{microtype}      % microtypography
\usepackage{multirow}
\usepackage{algorithm}
\usepackage{algpseudocode}
\usepackage{wrapfig}
\usepackage{subcaption}
\usepackage{amssymb}
\usepackage[all]{nowidow}
\usepackage{placeins}
\usepackage{tabularx}
\usepackage{dblfloatfix}

\usepackage{uoftcolors}

\usepackage{mathtools}
\usepackage{amsthm}

\usepackage[capitalize,noabbrev,nameinlink]{cleveref}

\theoremstyle{plain}
\newtheorem{theorem}{Theorem}[section]
\newtheorem{proposition}[theorem]{Proposition}
\newtheorem{lemma}[theorem]{Lemma}

\theoremstyle{definition}

\newtheorem{assumption}[theorem]{Assumption}
\theoremstyle{remark}

\usepackage[disable,textsize=tiny]{todonotes}

\title{Modification-Considering Value Learning for Reward Hacking Mitigation in RL}

\setrunningtitle{Modification-Considering Value Learning for Reward Hacking Mitigation in RL}

\author{Evgenii Opryshko\textsuperscript{1,2}, Umangi Jain\textsuperscript{1,2}, Igor Gilitschenski\textsuperscript{1,2}}

\emails{\{eop, umangi, gilitschenski\}@cs.toronto.edu}

\affiliations{$^{1}$\textbf{University of Toronto} \quad $^{2}$\textbf{Vector Institute}}

\contribution{
    We propose Modification-Considering Value Learning (MCVL), a safeguard for off-policy value-based RL that operationalizes current utility optimization: for each incoming transition, MCVL forecasts two training branches (with and without the transition), scores both with a frozen bootstrapped-return estimator built from a learned reward model and Q-function, and admits the transition only if inclusion does not decrease that score. MCVL wraps any off-policy value-based learner and requires no access to a safe reference policy. We instantiate MCVL with DDQN and TD3.
    }{
    MCVL requires a seed dataset of non-hacking transitions for pretraining the return estimator to distinguish task progress from reward hacking. Checking transitions introduces computational overhead. Current utility optimization was discussed in the context of AI safety \citep{yudkowsky2011complex, hibbard2012model, yampolskiy2014utility}, but has not been operationalized for standard value-based RL.
    }

\contribution{
    We show empirically that MCVL mitigates reward hacking across four safety-relevant gridworlds and three modified MuJoCo tasks while achieving performance comparable to an Oracle trained on the true reward; for continuous control, random-policy data suffices for the seed dataset.
    }{
    The evaluation demonstrates effectiveness across diverse hacking mechanisms rather than providing a controlled benchmark. Gridworld tasks require a Safe variant with the hacking affordance removed for the seed dataset.
    }

\contribution{
    We formalize safety, permissiveness, and bounded-degradation guarantees for MCVL's gating rule, parameterized by evaluator accuracy $\epsilon$ decomposed into reward-model and value-function error.
    }{
    The guarantees depend on an $\epsilon$-accurate return estimator. The bound is conservative: both branches share the same frozen networks and start states, producing correlated errors that partially cancel.  Achieving small $\epsilon$ is not guaranteed in general, though pretraining on hack-free data provides an initial fit, and successful filtering preserves buffer quality, helping maintain or improve the accuracy over time. 
    }  
	
\keywords{reinforcement learning, reward hacking, reward tampering, value learning, AI safety}

\summary{Reinforcement learning agents can exploit misspecified reward signals to achieve high apparent returns while failing on the intended objective, a failure mode known as reward hacking. Existing practical defenses typically constrain policy updates to stay near a known safe reference, creating a tension between suppressing hacking and permitting legitimate improvement. We propose Modification-Considering Value Learning (MCVL), which operationalizes the theoretical idea of current utility optimization for standard value-based RL. MCVL wraps an off-policy learner and treats each incoming transition as a candidate modification: it forecasts two training paths, one that includes the transition and one that does not, and scores both with a frozen bootstrapped-return estimator derived from a learned reward model and value function. The transition is admitted only if inclusion does not decrease the score. MCVL mitigates reward hacking across diverse environments while continuing to improve the intended objective.}

\begin{document}
\makeCover
\maketitle
\begin{abstract}
Reinforcement learning agents can exploit misspecified reward signals to achieve high apparent returns while failing on the intended objective, a failure mode known as reward hacking. Existing practical defenses typically constrain policy updates to stay near a known safe reference, creating a tension between suppressing hacking and permitting legitimate improvement. We propose Modification-Considering Value Learning (MCVL), which operationalizes the theoretical idea of current utility optimization for standard value-based RL. MCVL wraps an off-policy learner and treats each incoming transition as a candidate modification: it forecasts two training paths, one that includes the transition and one that does not, and scores both with a frozen bootstrapped-return estimator derived from a learned reward model and value function. The transition is admitted only if inclusion does not decrease the score. We formalize conditions under which this filtering is both safe and permissive, and instantiate MCVL with DDQN and TD3. Across four safety-relevant gridworlds and three modified MuJoCo continuous-control tasks with diverse hacking mechanisms, MCVL mitigates reward hacking while continuing to improve the intended objective. Project website: \href{https://ktolnos.github.io/mcvl/}{ktolnos.github.io/mcvl/}.
\end{abstract}

\section{Introduction}

Optimizing poorly defined or incomplete rewards can push RL agents toward unintended behaviors, leading to \emph{reward hacking}~\citep{skalse2022defining}. For instance, an agent tasked with stacking blocks may learn to flip blocks if the reward is based on the height of the bottom face~\citep{popov2017data}. As RL systems scale to safety-critical applications (e.g., autonomous driving~\citep{kiran2021deep} or medical diagnostics~\citep{ghesu2017multi}), ensuring reliable and safe behavior becomes increasingly important. Reward hacking can become more prevalent as models grow in complexity~\citep{pan2022effects}, which also affects large language models where RL is used for post-training~\citep{denison2024sycophancy,OpenAIo1SystemCard,macdiarmid2025natural}. A common mitigation constrains policy updates around a trusted reference~\citep{laidlaw2023preventing}, often at a cost to optimality.

%
% Research gap and our basic contribution
%
A complementary safeguard is to \emph{optimize what the agent currently values} while being conservative about changing those values, an idea discussed as \emph{current utility optimization}~\citep{orseau2011self, hibbard2012model, everitt2016self, everitt2021reward}. None of these works, however, provides a practical evaluation of this  concept. We address this gap by investigating whether individual transitions can be predictive of reward hacking in the context of value-based RL. Our method, \emph{Modification-Considering Value Learning (MCVL)}, wraps a standard off-policy learner and treats each update as a candidate modification. For a newly observed transition, the agent forecasts two scenarios: one in which it learns from the transition and one that ignores it. Then MCVL scores both resulting policies using its \emph{current} learned return estimator, an $n$-step bootstrapped return combining a learned reward model with a value-function bootstrap, and accepts the transition only if inclusion does not decrease this score. MCVL blocks updates that, according to the agent’s current return estimator, would shift behavior toward undesirable strategies. To study it empirically, we instantiate MCVL with DDQN and TD3.

%
% Assumptions and results.
%
Our method only assumes a seed dataset with non-hacking transitions so that the evaluator can identify the intended behavior; for gridworlds we collect this in a \emph{Safe} variant with the hacking affordance removed, while for continuous control a random-policy dataset suffices (\Cref{sec:method}). Under these conditions, MCVL mitigates reward hacking in multiple safety-relevant gridworlds~\citep{leike2017ai, everitt2021reward} and modified Gymnasium continuous-control environments~\citep{towers2024gymnasium} (Reacher, Ant, HalfCheetah) which we introduce to enable reward-hacking research in continuous control. All code will be made publicly available.

\section{Notation and Preliminaries}
We denote a Markov decision process (MDP) by $(S,A,P,R,\rho,\gamma)$ with state space $S$, action space $A$, transition model $P(s'|s,a)\in[0,1]$, reward function $R: S \times A \rightarrow \mathbb{R}$, initial state distribution $\rho$, and discount factor $\gamma \in (0,1)$. For any reward function $r$, we write $J_r(\pi)=\mathbb{E}_{\rho,\pi}[\sum_{t\ge0}\gamma^t r(s_t,a_t)]$ for the expected return of the policy $\pi$ under $r$. In standard RL, the agent's training objective is to learn a policy $\pi$ maximizing $J_R(\pi)$. The state-action value $Q^\pi(s,a)$ is the expected return starting from $(s,a)$ and following $\pi$ thereafter~\citep{sutton2018reinforcement}. Deep value-based methods like DDQN~\citep{vanHasselt2016DeepRL} and TD3~\citep{fujimoto2018addressing} approximate $Q$ with a neural network and learn from transitions $(s,a,r,s')$ sampled from a replay buffer via temporal-difference (TD) updates.

\paragraph{Reward hacking.}
Let $R$ denote the observed training reward and $R^*$ the intended reward (unobserved by the agent). The agent's true objective is to maximize $J_{R^*}(\pi)$, while observing only rewards from $R$. For a policy update from $\pi$ to $\pi'$, we say the update \emph{induces reward hacking} if $J_R(\pi') > J_R(\pi)$ but $J_{R^*}(\pi') < J_{R^*}(\pi)$. In words, the update looks better under the proxy while making intended performance worse. \citet{skalse2022defining} define \emph{hackability} as a property of reward-function pairs; our notion focuses on the policy update and is complementary.

\section{Method} \label{sec:method}
\emph{Modification-Considering Value Learning} (MCVL) wraps an off-policy learner and treats each learning update as a candidate modification to be evaluated before adoption. Because the desired objective is not observed, MCVL uses a learned \emph{current return estimator} as a proxy to accept or reject updates. MCVL poses a counterfactual question: if it were to allocate the next $l$ training steps either (i) to its current replay buffer $\mathcal{D}$ alone or (ii) to $\mathcal{D}$ augmented with the new transition $T_{\mathrm{new}}$, which resulting policy would achieve a higher expected return according to the agent’s \emph{current bootstrapped-return estimator}? Both branches use the same compute budget and are scored by the same evaluator. The transition is accepted if and only if adding $T_{\mathrm{new}}$ does not decrease the score. This yields a locally rational accept/reject rule under the agent’s present preferences.

\paragraph{Current bootstrapped-return estimator.}
MCVL maintains a reward model $R_\psi(s,a)$ trained by supervised regression on observed rewards and an action-value function $Q_\theta(s,a)$ trained with standard TD targets. Together they define a current $n$-step bootstrapped return estimator for a trajectory $\tau=(s_0,a_0,\ldots,s_{n-1},a_{n-1},s_n,a_n)$:
\begin{equation}
\label{eq:boot}
\hat G_n(\tau) \;=\; \sum_{t=0}^{n-1} \gamma^t\, R_\psi(s_t,a_t) \;+\; \gamma^n\, Q_\theta(s_n, a_n).
\end{equation}
During scoring, the estimator parameters $(\psi, \theta)$ are \emph{frozen to the live agent's current values}.

\paragraph{Policy forecasting and comparison.}
Upon observing $T_{\mathrm{new}}=(s,a,r,s')$, MCVL constructs two forecasts under an identical training budget of $l$ learner updates:
\[
\tilde\pi^{\,0} \;=\; \mathsf{Forecast}(\mathcal{D},\,l), 
\qquad
\tilde\pi^{\,+} \;=\; \mathsf{Forecast}(\mathcal{D}\cup\{T_{\mathrm{new}}\},\,l).
\]
The operator $\mathsf{Forecast}(\cdot,l)$ clones the current networks and runs $l$ base-learner updates on minibatches from the specified dataset. These updates do not affect the live agent. Both forecasts are scored by the same frozen evaluator from \autoref{eq:boot}. Let $\{s^{(i)}\}_{i=1}^k \sim \rho$ be a set of initial states and let rollouts be of length $n$ under the same transition model $\hat P$ for both branches. Define
\begin{equation}
\label{eq:J}
\hat J(\pi) \;=\; \frac{1}{k}\sum_{i=1}^k \; \big[\hat G_n(\tau \sim (\hat P,\pi)\,|\, s_0=s^{(i)})\big].
\end{equation}
as the approximated expected on-policy return under the current bootstrapped return estimator. It is used for evaluating $T_{\mathrm{new}}$: MCVL admits $T_{\mathrm{new}}$ if and only if $\hat J(\tilde\pi^{\,+}) \ge \hat J(\tilde\pi^{\,0})$. Using matched compute, evaluator parameters, and transition model isolates the effect of admitting $T_{\mathrm{new}}$ and makes the comparison less sensitive to estimation errors. The training procedure is described in \autoref{alg:mcvl}.

\paragraph{Instantiations (MC-DDQN and MC-TD3).}
MC-DDQN wraps a DDQN agent with an $\epsilon$-greedy behavior policy. Forecasting clones all parameters, including targets, and runs $l$ ordinary DDQN updates; forecasted policies are greedy with respect to their respective Q-functions. MC-TD3 analogously clones the actor and critics and runs $l$ standard TD3 updates. During scoring, rollouts use a transition model $\hat P$ (either the environment itself or a learned approximation; \Cref{sec:sensitivity}) and compute rewards using the learned reward model. If accepted, $T_{\mathrm{new}}$ is inserted into $\mathcal{D}$ and future updates may sample it to update both $Q_\theta$ and $R_\psi$. If rejected, the transition is discarded and the episode is reset, since future transitions may not be connected to the transitions in the replay buffer which impedes policy forecasting. Full algorithmic details appear in \Cref{apdx:full-alg,apdx:mctd3}.

\begin{algorithm}[t]
\caption{MCVL (wrapper around an off-policy value-based learner)}\label{alg:mcvl}
\small
\begin{algorithmic}[1]
\While{training}
    \State Observe $T_{\mathrm{new}}=(s,a,r,s')$  \Comment{Action is selected using the policy of a base learner}
    \If{$|\,r-R_\psi(s,a)\,| < \delta_r$} \Comment{Optional check to avoid excessive evaluations}
        \State Insert $T_{\mathrm{new}}$ into replay buffer $\mathcal{D}$; Perform a training step; \textbf{continue}
    \EndIf
    \State $\tilde\pi^{\,0} \leftarrow$ $\mathsf{Forecast}${$(\mathcal{D},\,l)$} \Comment{Forecast performs $l$ training steps on a provided replay buffer}
    \State $\tilde\pi^{\,+} \leftarrow$ $\mathsf{Forecast}${$(\mathcal{D}\cup\{T_{\mathrm{new}}\},\,l)$}
    \State Estimate $\hat J(\tilde\pi^{\,0})$ and $\hat J(\tilde\pi^{\,+})$ via $k$ rollouts of length $n$ using \autoref{eq:J}
    \If{$\hat J(\tilde\pi^{\,+}) \ge \hat J(\tilde\pi^{\,0})$}
        \State Insert $T_{\mathrm{new}}$ into $\mathcal{D}$
    \Else \Comment{The transition is suspected to be reward hacking-inducing}
        \State Reset the environment \Comment{Continued exploration might impede the detection accuracy}
    \EndIf
    \State Perform a training step: sample a batch from  $\mathcal{D}$ and update the base learner and $R_\psi$ on it.
\EndWhile
\end{algorithmic}
\end{algorithm}

\paragraph{Pretraining.}
MCVL assumes a seed dataset of non-hacking transitions $\mathcal{D}_0$. The motivation is identifiability: without such data, a learned return estimator cannot distinguish genuine task progress from reward hacking. We fit $R_\psi$ by supervised regression on the observed proxy rewards in $\mathcal{D}_0$ and train $Q_\theta$ with standard TD targets as part of the normal RL training. After pretraining, every newly observed transition is screened by the forecast-and-score check; since $R_\psi$ and $Q_\theta$ continue to update with each base-learner step, the evaluator incorporates new information beyond the seed data.

We study two practical sources of $\mathcal{D}_0$: (i)~\emph{Safe} variants that match observations, actions, and reward but remove the hacking affordance, so that hacking transitions are unreachable (used for gridworlds where undirected exploration quickly discovers hacks); and (ii)~random-policy data collected in the \emph{Full} environment, which suffices when hacking requires specific conditions unlikely under random exploration (used for all three continuous-control tasks). Importantly, neither source reveals $R^*$ where $R$ and $R^*$ differ, and policies trained on $\mathcal{D}_0$ transfer only imperfectly to the Full environment.

\paragraph{Hyperparameters and cost.}
To limit overhead, we invoke forecasting only when the observed reward disagrees with the reward model,
$\lvert r - R_\psi(s,a)\rvert \ge \delta_r$; otherwise $T_{\mathrm{new}}$ is admitted without a check as a heuristic. As shown in \Cref{sec:sensitivity}, this filtering does not change conclusions in our experiments. The marginal per-transition cost is $2l$ base-learner updates plus $2k\!\cdot\!n$ rollout steps; $\delta_r$ controls how often this cost is paid. Detailed hyperparameter guidance appears in \Cref{apdx:hyperparams}.

\paragraph{Reward hacking prevention.}
MCVL evaluates the \emph{policy change} from admitting a transition using the agent’s current bootstrapped-return estimator, relative to an equally trained counterfactual that excludes it. This yields a local self-consistency test: if inclusion steers learning toward behavior the evaluator already scores worse over horizon $n$ (e.g., shifting effort from task completion to reward tampering), the update is vetoed. If inclusion raises (or leaves unchanged) the score, the transition is admitted. This captures ordinary competence gains (shorter paths, reduced control effort) the evaluator already values. While not every hack is guaranteed to lower the score, in our experiments MCVL rejects the updates that produce undesired behaviors across diverse environments.

\paragraph{Theoretical analysis.}
We formalize the conditions under which MCVL correctly accepts or rejects transitions. Let $J_{R^*}(\pi) = \mathbb{E}_{\rho,\pi}\bigl[\sum_{t \ge 0} \gamma^t R^*(s_t, a_t)\bigr]$ denote the true expected return under the intended reward $R^*$, which the agent does not observe.

\begin{assumption}[$\epsilon$-accurate evaluator]\label{asm:accuracy}
The bootstrapped-return estimator $\hat J$ is \emph{$\epsilon$-accurate} over a policy class $\Pi$ if $|\hat J(\pi) - J_{R^*}(\pi)| \le \epsilon$ for all $\pi \in \Pi$.
\end{assumption}

\begin{proposition}[Safety, permissiveness, and bounded degradation]\label{prop:safety-permissiveness}
Suppose $\hat J$ is $\epsilon$-accurate over $\Pi \supseteq \{\tilde\pi^{\,+}, \tilde\pi^{\,0}\}$. Then:
\begin{enumerate}[leftmargin=*]
    \item \textbf{Safety.} If $J_{R^*}(\tilde\pi^{\,+}) < J_{R^*}(\tilde\pi^{\,0}) - 2\epsilon$, then MCVL rejects $T_{\mathrm{new}}$.
    \item \textbf{Permissiveness.} If MCVL rejects $T_{\mathrm{new}}$, then $J_{R^*}(\tilde\pi^{\,+}) < J_{R^*}(\tilde\pi^{\,0}) + 2\epsilon$. That is, MCVL never rejects a transition whose inclusion would improve true performance by more than $2\epsilon$.
    \item \textbf{Bounded degradation.} If MCVL accepts $T_{\mathrm{new}}$, then $J_{R^*}(\tilde\pi^{\,+}) \ge J_{R^*}(\tilde\pi^{\,0}) - 2\epsilon$.
\end{enumerate}
\end{proposition}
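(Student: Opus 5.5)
The plan is to reduce all three claims to one two-sided bound on the difference of estimated scores, using nothing beyond \Cref{asm:accuracy} and the triangle inequality. Write $\Delta = \hat J(\tilde\pi^{\,+}) - \hat J(\tilde\pi^{\,0})$ for the quantity MCVL thresholds, and $\Delta^* = J_{R^*}(\tilde\pi^{\,+}) - J_{R^*}(\tilde\pi^{\,0})$ for the true improvement. Both forecasts lie in $\Pi$, so applying $\epsilon$-accuracy to each and subtracting gives
\[
\bigl|\Delta - \Delta^*\bigr| \;\le\; \bigl|\hat J(\tilde\pi^{\,+}) - J_{R^*}(\tilde\pi^{\,+})\bigr| + \bigl|\hat J(\tilde\pi^{\,0}) - J_{R^*}(\tilde\pi^{\,0})\bigr| \;\le\; 2\epsilon,
\]
that is, $\Delta^* - 2\epsilon \le \Delta \le \Delta^* + 2\epsilon$. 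By definition of the gating rule in \autoref{alg:mcvl}, MCVL accepts exactly when $\Delta \ge 0$ and rejects exactly when $\Delta < 0$. So the proof becomes three short implications between signs of $\Delta$ and values of $\Delta^*$.

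For \textbf{Safety}, assume $\Delta^* < -2\epsilon$. The upper bound gives $\Delta \le \Delta^* + 2\epsilon < 0$, so MCVL rejects. For \textbf{Bounded degradation}, assume MCVL accepts, so $\Delta \ge 0$. The upper bound then gives $0 \le \Delta^* + 2\epsilon$, i.e.\ $J_{R^*}(\tilde\pi^{\,+}) \ge J_{R^*}(\tilde\pi^{\,0}) - 2\epsilon$. This is just the contrapositive of Safety, and I would note that.

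For \textbf{Permissiveness}, assume MCVL rejects, so $\Delta < 0$. The lower bound gives $\Delta^* - 2\epsilon \le \Delta < 0$, hence $\Delta^* < 2\epsilon$, which is the claim. The verbal restatement ("never rejects a transition whose inclusion would improve true performance by more than $2\epsilon$") is its contrapositive: if $\Delta^* \ge 2\epsilon$, then $\Delta \ge 0$ and MCVL accepts.

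No step is a genuine obstacle. The only care needed is bookkeeping:
\begin{itemize}
\item keep strict versus non-strict inequalities consistent with the tie-breaking rule (ties $\Delta = 0$ are accepted);
\item check that $\epsilon$-accuracy is invoked for both forecasted policies, which the hypothesis $\Pi \supseteq \{\tilde\pi^{\,+}, \tilde\pi^{\,0}\}$ provides.
\end{itemize}
If the rollout estimate $\hat J$ is stochastic (random start states, a learned $\hat P$), the statement should be read conditionally on the realized $\hat J$. I would remark that the assumption is imposed on that realized estimator, so no probabilistic argument is needed.
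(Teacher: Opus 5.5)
Your proof is correct and is essentially the paper's argument. The paper applies $\epsilon$-accuracy to each forecasted policy separately and chains the inequalities claim by claim, whereas you package those same two applications into the single bound $|\Delta-\Delta^*|\le 2\epsilon$, read each claim off the sign of $\Delta$, and note that (iii) is the contrapositive of (i). Like the paper, you take the gate to be exactly ``admit iff $\hat J(\tilde\pi^{\,+}) \ge \hat J(\tilde\pi^{\,0})$''; with the optional $\delta_r$ shortcut of Algorithm~\ref{alg:mcvl} switched on, transitions it admits are never scored, so (i) and (iii) can fail for them.
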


The $2\epsilon$ threshold can be made explicit in terms of the reward-model error $\epsilon_R$ and the value-function error $\epsilon_Q$; see \Cref{apdx:proofs} for proofs and the full error decomposition. The bound clarifies two practical aspects: (i)~pretraining on hack-free data reduces $\epsilon_R$ and $\epsilon_Q$, tightening the safety bound, and (ii)~successful filtering preserves buffer quality, keeping the evaluator accurate over time. In practice, the effective gap is likely tighter than $2\epsilon$ because both branches are scored by the same frozen evaluator on the same start states and rolled out under the same transition model, producing positively correlated errors that largely cancel in the comparison.

\section{Experiments} \label{sec:experiments}

\begin{figure*}[t]
    \vspace{-3mm}
	\centering
	\begin{subfigure}[t]{.2\linewidth}
		\centering
		\includegraphics[width=.45\linewidth]{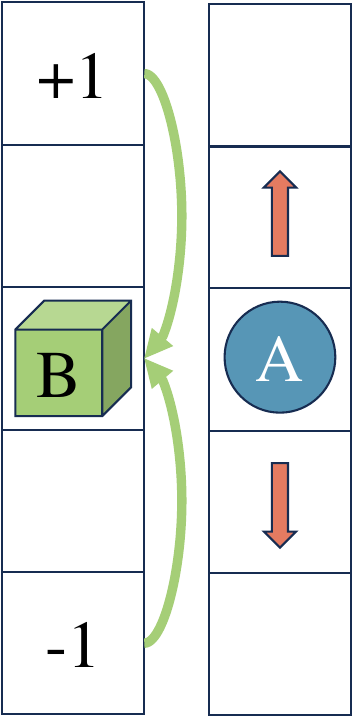}
		\caption{Safe}
	\end{subfigure}\hfill
	\begin{subfigure}[t]{.2\linewidth}
        \hspace{0.18\linewidth}
		\centering\includegraphics[width=.6\linewidth]{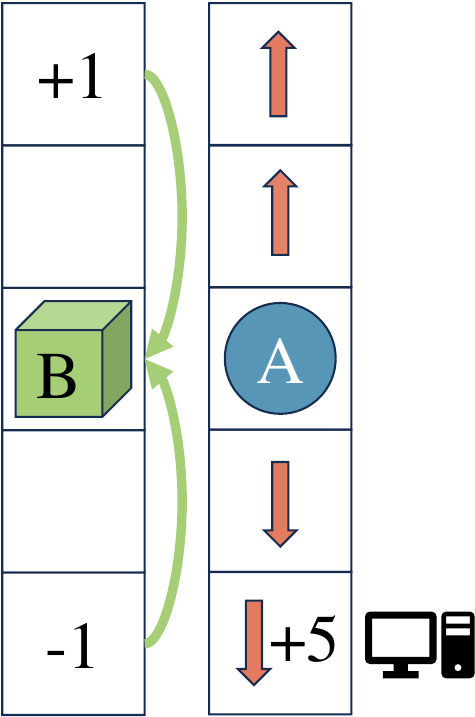}
		\caption{Full}
	\end{subfigure}\hfill
	\begin{subfigure}[t]{.2\linewidth}
		\centering\includegraphics[width=.45\linewidth]{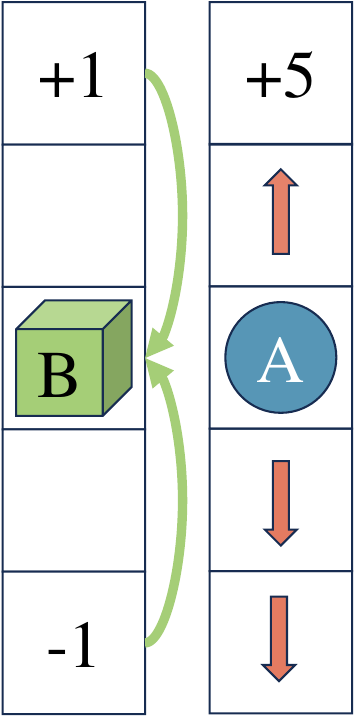}
		\caption{No-Hack}
	\end{subfigure}\hfill
	\begin{subfigure}[t]{.22\linewidth}
        \vspace{-25mm}
		\centering\includegraphics[width=.6\linewidth]{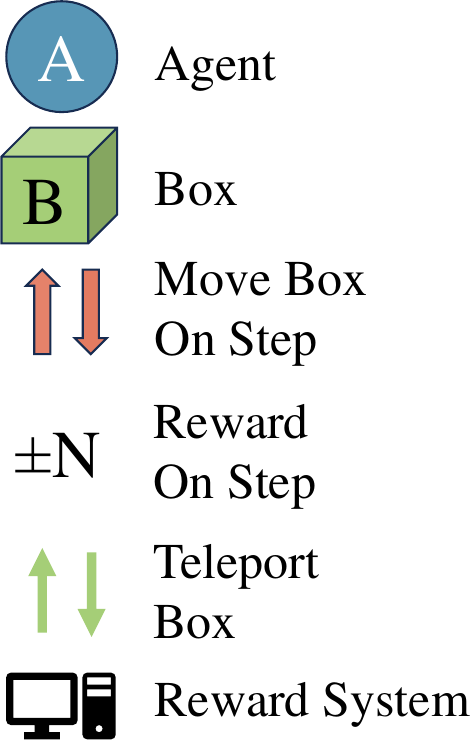}
	\end{subfigure}
	\caption{\emph{Box Moving}: the agent moves up or down; stepping on an arrow moves the box that way, and the box teleports to the center on reaching the top. (\textbf{a}) \emph{Safe}: the agent learns to move the box upward. (\textbf{b}) \emph{Full}: a bottom cell yields a spurious $+5$ when pressed; reward-maximizing behavior repeatedly triggers it with down-arrows, moving the box downward against the true objective, whereas a non-hacking strategy can still raise return by alternating two up-arrows. (\textbf{c}) \emph{No-Hack}: collecting the $+5$ does not block moving the box up, so such transitions are aligned with the objective and should not be rejected.}
	\label{fig:box}
\end{figure*}

We evaluate whether MCVL mitigates reward hacking while continuing to improve task performance. We compare MCVL to its base learner (DDQN in discrete domains; TD3 in continuous control), an Oracle agent trained with the base learner on the \emph{true} reward (which MCVL cannot access), and a Frozen policy that fixes the pretrained networks and performs no further learning in the \emph{Full} environment. All methods share hyperparameters, pretrained weight initialization, and the pretraining replay buffer. We use a learned reward model and true transition dynamics for scoring in all experiments unless stated otherwise. We report the \emph{true performance} (intended objective) and the \emph{observed return} for each environment, with means and bootstrapped 95\% CIs over 10 seeds.

\subsection{Environments}

We introduce the \emph{Box Moving} environment (\autoref{fig:box}) to illustrate representative accept/reject decisions under MCVL, and adopt \emph{Absent Supervisor} and \emph{Tomato Watering} from AI Safety Gridworlds~\citep{leike2017ai} and \emph{Rocks and Diamonds} from \citet{everitt2021reward}; all are shown in \autoref{fig:gridworlds}. For each gridworld we provide a \emph{Safe} variant for pretraining.

\paragraph{Continuous control (MuJoCo).}  For continuous control, we modify Reacher-v5, Ant-v5, and HalfCheetah-v5~\citep{towers2024gymnasium} by introducing spurious reward sources that model reward tampering or sensor malfunctions (details in \Cref{apdx:reacher}). We augment Reacher-v5 with a hidden button yielding a one-time $+50$ after the manipulator has spent 15 consecutive frames in a small region. We modify Ant-v5 with a sensor-malfunction bonus triggered by high contact force and forward velocity, modeling IMU saturation in legged robots; and HalfCheetah-v5 with an optical-flow sensor failure bonus triggered when the front tip is low and speed is high. In each case the performance metric excludes the bonus, and random exploration is unlikely to trigger the hack.

\begin{figure*}[b]
    \vspace{-3mm}
	\centering
	\begin{subfigure}[t]{.379\linewidth}
		\centering
		\includegraphics[width=0.8\linewidth,trim=4mm 4mm 4mm 4mm, clip]{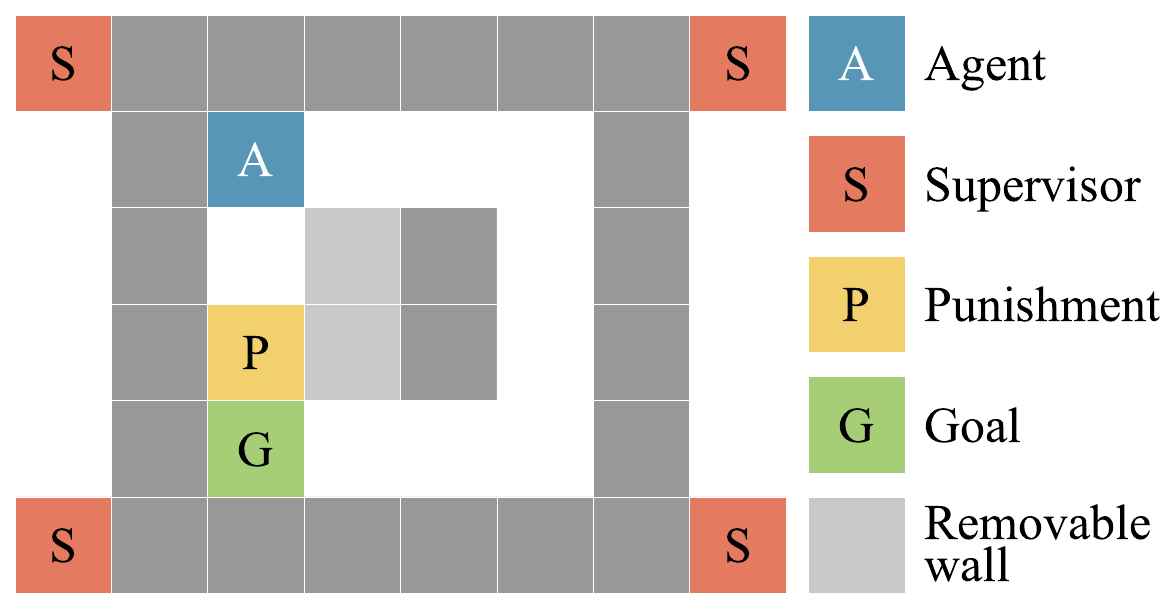}
		\caption{Absent Supervisor}
		\label{fig:absent}
	\end{subfigure}\hfill
	\begin{subfigure}[t]{.35\linewidth}
		\centering\includegraphics[width=0.8\linewidth,trim=4mm 4mm 4mm 4mm, clip]{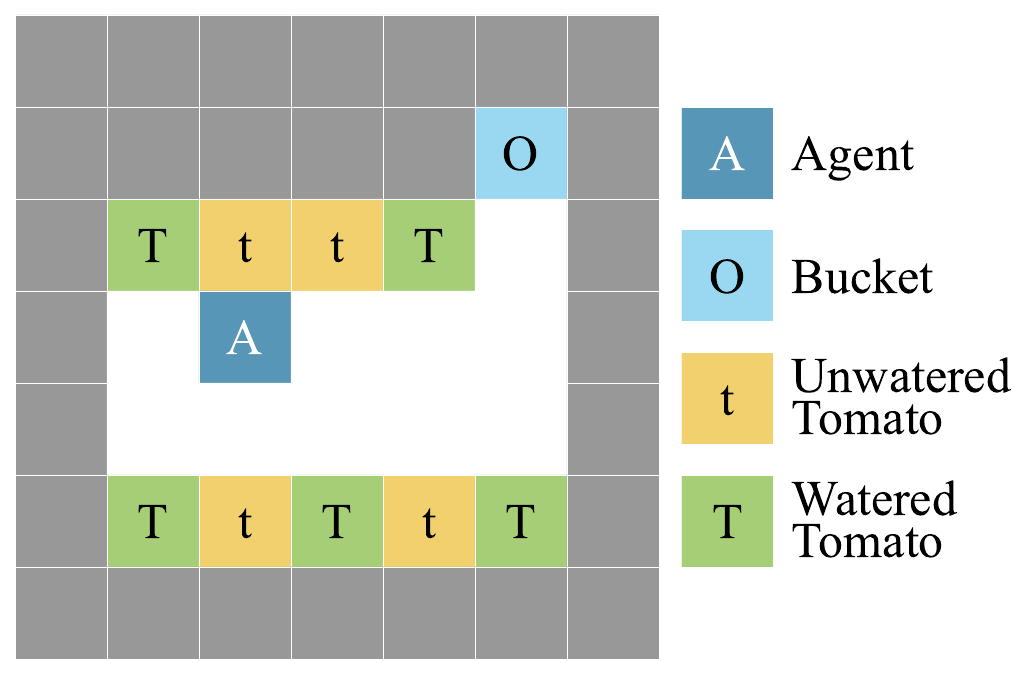}
		\caption{Tomato Watering}
	\end{subfigure}\hfill
	\begin{subfigure}[t]{.2535\linewidth}
		\centering\includegraphics[width=0.8\linewidth,trim=4mm 4mm 4mm 4mm, clip]{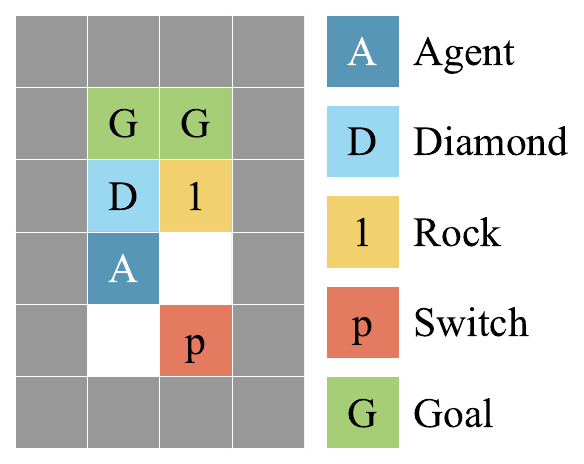}
		\caption{Rocks and Diamonds}
	\end{subfigure}
	\caption{(\textbf{a})~\emph{Absent Supervisor}: the shortest path to the goal passes through a punishment cell whose cost depends on whether a supervisor is present. In \emph{Safe} the supervisor is always present; in \emph{Full} it appears with probability $0.5$. True performance penalizes the punishment cell regardless of supervision. (\textbf{b})~\emph{Tomato Watering}: the agent waters tomatoes that dry stochastically; in \emph{Full} a bucket causes \emph{perceptual delusion}, making all cells appear as watered tomatoes. True performance counts true watered tomatoes; \emph{Safe} removes the bucket. We provide a non-delusional transition model for scoring. (\textbf{c})~\emph{Rocks and Diamonds}: diamonds yield $+1$ per step, rocks $-1$. In \emph{Full} a switch flips the observed rock reward to $+10$ while true performance still prefers diamonds; \emph{Safe} removes the switch.}
	\label{fig:gridworlds}
\end{figure*}

\subsection{Main results}

\begin{figure*}[t]
    \vspace{-3mm}
	\centering
    \captionsetup[subfigure]{justification=centering}
	\begin{subfigure}[t]{.142\linewidth}
		\centering
		\includegraphics[width=1\textwidth,trim=2mm 0mm 2mm 0mm, clip]{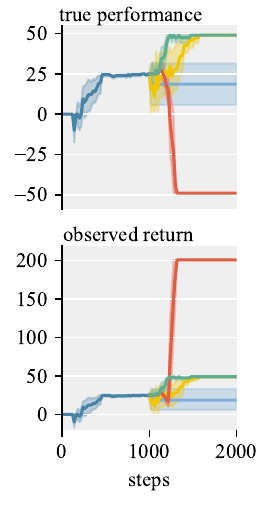}
		\vspace*{-6mm}
		\caption{Box \\ Moving}
	\end{subfigure}\hfill
    \hspace{-4mm}
	\begin{subfigure}[t]{.145\linewidth}
		\centering
        \includegraphics[width=1\textwidth,trim=2mm 0mm 2mm 0mm, clip]{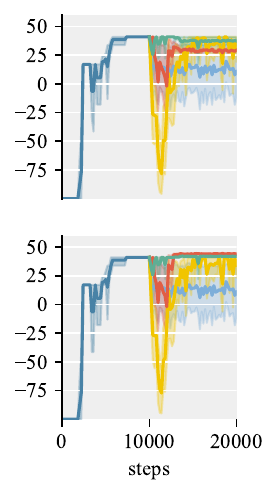}
		\vspace*{-6mm}
		\caption{Absent \\ Supervisor}
	\end{subfigure}\hfill
    \hspace{-4mm}
	\begin{subfigure}[t]{.138\linewidth}
		\centering\includegraphics[width=1\textwidth,trim=2mm 0mm 2mm 0mm, clip]{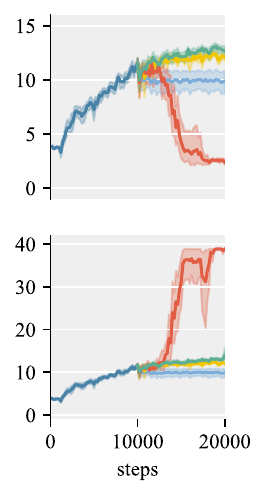}
		\vspace*{-6mm}
		\caption{Tomato \\ Watering}
	\end{subfigure}\hfill
    \hspace{-4mm}
	\begin{subfigure}[t]{.150\linewidth}
		\centering\includegraphics[width=1\textwidth,trim=2mm 0mm 2mm 0mm, clip]{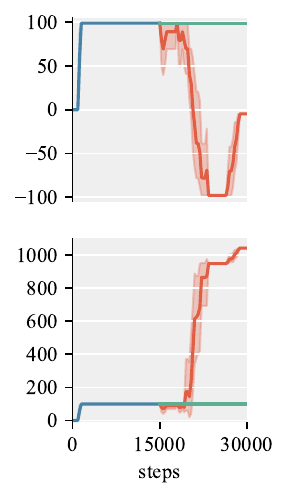}
		\vspace*{-6mm}
		\caption{Rocks \& \\ Diamonds}
	\end{subfigure}\hfill
    \hspace{-4mm}
	\begin{subfigure}[t]{.148\linewidth}
		\centering\includegraphics[width=1\textwidth,trim=0mm 0mm 0mm 0mm, clip]{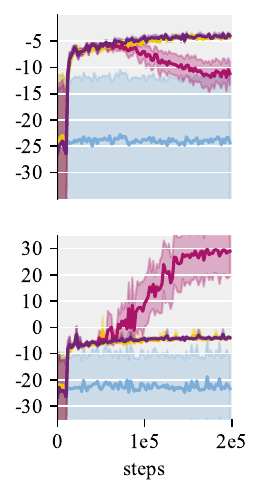}
		\vspace*{-6mm}
		\caption{Reacher}
	\end{subfigure}\hfill
    \hspace{-4mm}
	\begin{subfigure}[t]{.153\linewidth}
		\centering\includegraphics[width=1\textwidth,trim=0mm 0mm 0mm 0mm, clip]{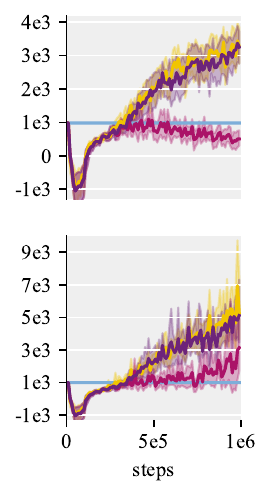}
		\vspace*{-6mm}
		\caption{Ant}
        \label{fig:main-ant}
	\end{subfigure}\hfill
        \hspace{-4mm}
	\begin{subfigure}[t]{.154\linewidth}
		\centering\includegraphics[width=1\textwidth,trim=0mm 0mm 0mm 0mm, clip]{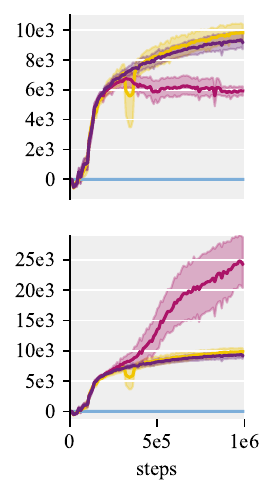}
		\vspace*{-6mm}
		\caption{HalfCheetah}
	\end{subfigure}\hfill
	\begin{subfigure}[t]{1\linewidth}
		\centering\includegraphics[width=0.95\linewidth,trim=2mm 90mm 2mm 5mm, clip]{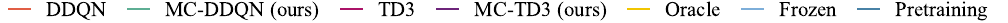}
	\end{subfigure}
	\caption{\textbf{Main results.} Top: true performance metric (intended objective, unobserved). Bottom: observed return (proxy). We compare the base learner (DDQN/TD3), MCVL, an Oracle trained on true reward, and a Frozen policy that stops learning after pretraining. Bold lines: mean over 10 seeds; bands: bootstrapped 95\% CI.}
	\label{fig:main-results}
	\vspace*{-3mm}
\end{figure*}

\autoref{fig:main-results} shows that MCVL maintains high performance across all tasks while the base learners hack. MCVL attains final performance that is comparable to the Oracle in all tasks despite never accessing the true reward. In \emph{Box Moving}, \emph{Absent Supervisor}, and \emph{Tomato Watering}, MCVL reaches strong performance faster than the Oracle, which we hypothesize is due to an implicit curriculum induced by rejecting transitions that cause large behavioral shifts early in training. 

In continuous-control environments, MC-TD3 achieves performance comparable to the Oracle while the Frozen policy substantially underperforms, suggesting that continued learning beyond pretraining is important in the considered environments. Further details of the continuous-control experiments are provided in \autoref{apdx:reacher}.

Across environments, the qualitative pattern is consistent: transitions inducing reward hacking produce forecasted policies that score lower than counterfactuals trained without them. In \emph{Full} Box Moving, exploiting the $+5$ tile yields lower forecasted return than pursuing up-arrows (while in the \emph{No-Hack} variant, MCVL admits all transitions and recovers the Oracle policy, \autoref{fig:abl_box_notam}); in Absent Supervisor, unsupervised punishment-cell transitions produce policies that use the cell more often, scoring lower under an evaluator pretrained to always penalize it regardless of supervision state; in Rocks and Diamonds, rock-pushing transitions score below diamond-prioritizing ones.  In Tomato Watering, MCVL mitigates hacking only with a non-delusional transition model which tracks true world state and lets the evaluator distinguish genuine watering from the bucket's illusion, much as a person in VR still knows their physical location. For continuous control, sensor-corruption and reward-tampering transitions produce policies that prioritize the bonus over movement. In Ant pursuing high contact forces impedes stable policy learning for TD3, but MC-TD3 tracks Oracle.

\subsection{Ablations and sensitivity} \label{sec:sensitivity}

We study when to trigger checks, how to conduct them, and how to handle harmful transitions (\autoref{fig:ablations}). Triggering only when $|r - R_\psi(s,a)| \ge \delta_r$ (\emph{Check-by-reward}) performs comparably to \emph{Check-all} but with lower computational cost, and outperforms \emph{Discard-by-reward} (which never admits large-discrepancy transitions), as the latter filters out legitimately informative data and fails to reach optimal performance. Removing the gate entirely and bootstrapping on a frozen reward model (\emph{Freeze RM}) isolates the gate's contribution: it matches the Oracle in five of our seven tasks, so a reward model fit on the pretraining data is often already sufficient, but it collapses where that model contains errors. Notably, in \emph{Absent Supervisor}, dropping the gate induces reward hacking and a sharp fall in true performance; in one Ant seed, the no-gate policy even drove its predicted return far above any achievable true return while true performance collapsed (\autoref{apdx:freezerm}).

\paragraph{Importance of forecasting.}
An \emph{Each-step} variant that compares the policy before and after a \emph{single} gradient step does not reliably prevent hacking. Policy changes only occur once the critic begins assigning higher value to the new behavior, at which point both the critic and reward model already endorse it. By contrast, allowing $l$ standard updates during forecasting gives the base learner enough room to translate a transition into a meaningful policy shift, which the evaluator can then assess effectively using current live networks. The scoring noise for the branch comparison does not depend on $l$, while forecasted policies are more likely to diverge with $l$, so increasing $l$ improves detection reliability (see \Cref{apdx:proofs}).

\paragraph{Reject vs. penalize.}
Replacing rejected transitions with large negative rewards (\emph{Punishment}) is less effective than discarding them. Punished transitions accumulate in the buffer which discourages learning policies that exploit them. However, this also prevents learning these policies during forecasting, which decreases the reliability of the forecast-and-score gate.

\paragraph{Pretraining budget.}
As shown in \autoref{fig:init_train_steps}, some seeds avoid hacking with as few as 100 pretraining steps in \emph{Safe}; by 300 steps all seeds succeed, even though most have not converged to the optimal policy in the \emph{Safe} environment. With zero pretraining, MCVL matches the base learner.

\begin{figure*}[t!]
    \vspace{-6mm}
	\centering
	\begin{subfigure}[t]{0.46\linewidth}
		\centering
        \includegraphics[width=\linewidth,trim=2mm 0 2mm 0, clip]{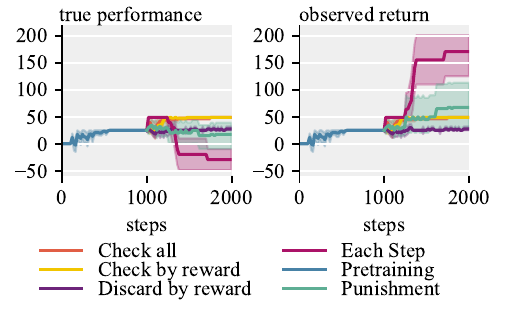}
		\vspace*{-7mm}
		\caption{Alternative training schemes}
		\label{fig:abl_box_full}
	\end{subfigure}\hfill
	\begin{subfigure}[t]{0.46\linewidth}
		\centering\includegraphics[width=\linewidth,trim=2mm 0 2mm 0, clip]{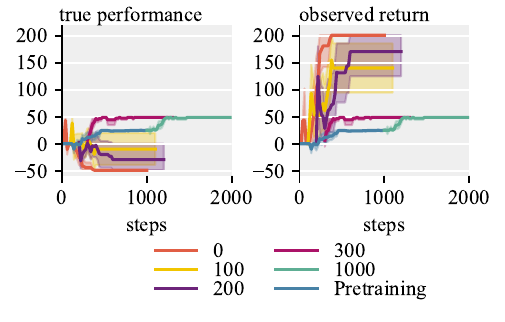}
		\vspace*{-7mm}
		\caption{Pretraining steps}
		\label{fig:init_train_steps}
	\end{subfigure}
	\caption{ Additional experiments in Box Moving. (a) Comparison of training schemes: \emph{Check all} checks all transitions; \emph{Check by reward} checks only transitions for which predicted reward differs from the observed by at least $\delta$; \emph{Discard by reward} discards all transitions where predicted reward sufficiently differs from the observed; \emph{Each step} evaluates policies before and after each gradient step without forecasting future policies; \emph{Punishment} replaces rejected transitions' rewards with a punishment reward. (b) Effect of different amounts of pretraining, 0 means no pretraining. After as low as 300 steps, MCVL can achieve optimal performance across all seeds.}
	\label{fig:ablations}
	\vspace*{-3mm}
\end{figure*}

\paragraph{Learned transition model.}
Because both branches share the same $\hat P$, exact dynamics are unnecessary; it suffices that the model preserves the relative ranking of hacking and non-hacking trajectories. Replacing the environment with a learned forward model in \emph{Box Moving} preserves Oracle performance while avoiding reward hacking (\autoref{apdx:learned-model}).

\paragraph{Forecast budget $l$.}
Too small $l$ fails to capture the policy change induced by a transition, reducing the robustness of rejecting harmful updates, which slows learning of reward hacking, but does not completely prevent it. Increasing $l$ resolves this (\autoref{fig:tam_train_steps}). Additional experiments in different variations of gridworld environments are provided in \autoref{apdx:experiments}.

\subsection{Comparison to occupancy-regularized objectives}

The closest practical baseline in standard RL settings is occupancy-regularized policy optimization toward a known safe policy~\citep{laidlaw2023preventing}. When the Oracle policy deviates substantially from the reference, the regularization strength sufficient to suppress hacking also suppresses Oracle-level improvements, so no single $\lambda$ separates the two. We test this with the objective \(F(\pi,\pi_{\text{ref}})=J(\pi)-\lambda\,D(\mu_\pi\|\mu_{\pi_{\text{ref}}})
\) holding the reference fixed to Frozen. Across 10 seeds and two divergences ($\sqrt{\chi^2}$, KL), such a $\lambda$ \emph{often does not exist} (\Cref{apdx:orpo}). By contrast, MCVL achieves performance comparable to the Oracle without relying on a safe policy.

\section{Limitations and Future Work} \label{sec:limitations}

\textbf{Computation.}
MCVL adds forecasting and scoring overhead. Triggering checks only on reward discrepancies (\Cref{sec:method}) keeps the overall wall-clock training time to $1.8\times$--$4.2\times$ slowdowns versus the base learner on continuous-control tasks (\Cref{apdx:compute}). Further reductions appear feasible through caching and batched rollouts.

\textbf{Scope of applicability.}
MCVL assumes the evaluator ranks hacking-inducing trajectories below non-hacking ones at scoring horizons. If proxy misspecification is already valued by the evaluator, harmful updates may be admitted. This may happen due to incorrect reward shaping, as in CoastRunners~\citep{OpenAI2023FaultyReward} where the agent learns to repeatedly collect boosts instead of following the track. If the reward for boosts is learned by the reward model, the gating mechanism would not reject policies that exploit it. MCVL is therefore complementary to better reward design, including potential-based shaping~\citep{Ng1999PolicyIU}.

\textbf{Transition dynamics.}
Because both branches share $\hat P$, relative ranking accuracy matters more than absolute fidelity. In \emph{Box Moving}, learned-model rollouts preserve Oracle-level MCVL performance (\Cref{apdx:learned-model}). Other environments were tested with true dynamics for scoring; learning accurate world models for higher-dimensional environments is left for future work.

\textbf{Pretraining dependence.}
MCVL needs a seed dataset with no hacking transitions to identify the intended behavior; miscalibrated initial evaluators can entrench bad preferences. In our tasks, \emph{Safe}-variant data (gridworlds) or random exploration (continuous control) was sufficient; alternatives include simpler-task pretraining, trajectory filtering, or human demonstrations. MCVL does not modify the base learner's exploration policy; it only gates which transitions enter the replay buffer, so exploration isn't affected, as demonstrated by the Oracle-comparable performance. 

\section{Related Work}\label{sec:related}
Agents exploiting misspecified objectives are studied as \emph{reward hacking}~\citep{skalse2022defining}, \emph{reward gaming}~\citep{leike2018scalable}, and \emph{specification gaming}~\citep{krakovna2020specification}. \citet{krakovna2020specification} survey such failures, and \citet{skalse2022defining} formalize reward hacking and show that it cannot be prevented without restricting the set of possible policies or controlling optimization.

A common mitigation is constraining policy to remain close to a trusted behavior distribution~\citep{laidlaw2023preventing, liu2026robust}. MCVL instead filters individual replay updates using counterfactual forecasting, without requiring a safe reference policy or specific proxy-family assumptions. In our settings, MCVL reaches optimal returns where ORPO-style objectives often cannot simultaneously avoid hacking and learn the optimal policy (\autoref{apdx:orpo}).

Direct manipulation of reward channels is studied as \emph{wireheading}~\citep{amodei2016concrete, taylor2016alignment, everitt2016avoiding, majha2019categorizing} or \emph{reward tampering}~\citep{kumar2020realab, everitt2021reward}. A long-standing idea is \emph{current utility optimization}: choose actions that improve the current objective without changing what is optimized~\citep{yudkowsky2011complex, hibbard2012model, yampolskiy2014utility}. \citet{schmidhuber2003godel} describes a self-modifying \emph{G\"odel machine} agent that adopts only code or utility changes provably beneficial according to the current objective. \citet{everitt2016avoiding} consider Bayesian agents over hand-specified utility functions that select actions to avoid altering beliefs about the reward mechanism, and \citet{everitt2021reward} give conditions under which optimizing the \emph{current} reward avoids incentives to tamper. MCVL operationalizes this perspective in practical off-policy value-based RL, including settings beyond direct reward/sensor tampering.

\section{Conclusion} \label{sec:discussion}
We introduced \emph{Modification-Considering Value Learning}, a forecast-and-score safeguard for off-policy value-based RL. MCVL evaluates two counterfactual update paths with a frozen bootstrapped-return estimator and admits a transition only if inclusion does not reduce that score, optimizing what the agent currently values while remaining conservative about changing those values.

Implementations, MC-DDQN and MC-TD3, mitigate reward hacking in safety-relevant gridworlds and modified MuJoCo tasks while remaining close to Oracle on true reward. By operationalizing ideas from current utility optimization within standard deep RL, MCVL offers a practical way toward avoiding reward hacking without sacrificing performance on the intended objective.

\subsubsection*{Acknowledgments}
Resources used in preparing this research were provided, in part, by the Province of Ontario, the Government of Canada through CIFAR, companies sponsoring the Vector Institute, Trajectory Labs (\href{https://trajectorylabs.org}{trajectorylabs.org}) and the Digital Research Alliance of Canada (\href{https://alliancecan.ca}{alliancecan.ca}). Any opinions, findings, conclusions, or recommendations expressed in this material are those of the authors and do not necessarily reflect the view of the Canadian government.

\appendix

\section{Proofs and Error Decomposition}\label{apdx:proofs}

\begin{proof}[Proof of \Cref{prop:safety-permissiveness}]
All three claims follow from the $\epsilon$-accuracy of $\hat J$.

\emph{(i) Safety.} By $\epsilon$-accuracy, $\hat J(\tilde\pi^{\,+}) \le J_{R^*}(\tilde\pi^{\,+}) + \epsilon$ and $\hat J(\tilde\pi^{\,0}) \ge J_{R^*}(\tilde\pi^{\,0}) - \epsilon$. Combining with the hypothesis $J_{R^*}(\tilde\pi^{\,+}) < J_{R^*}(\tilde\pi^{\,0}) - 2\epsilon$:
\[
\hat J(\tilde\pi^{\,+}) \le J_{R^*}(\tilde\pi^{\,+}) + \epsilon < J_{R^*}(\tilde\pi^{\,0}) - 2\epsilon + \epsilon = J_{R^*}(\tilde\pi^{\,0}) - \epsilon \le \hat J(\tilde\pi^{\,0}).
\]
Hence $\hat J(\tilde\pi^{\,+}) < \hat J(\tilde\pi^{\,0})$ and the accept condition fails.

\emph{(ii) Permissiveness.} MCVL rejects means $\hat J(\tilde\pi^{\,+}) < \hat J(\tilde\pi^{\,0})$. By $\epsilon$-accuracy:
\[
J_{R^*}(\tilde\pi^{\,+}) \le \hat J(\tilde\pi^{\,+}) + \epsilon < \hat J(\tilde\pi^{\,0}) + \epsilon \le J_{R^*}(\tilde\pi^{\,0}) + 2\epsilon.
\]
Hence $J_{R^*}(\tilde\pi^{\,+}) < J_{R^*}(\tilde\pi^{\,0}) + 2\epsilon$, so no transition with true improvement exceeding $2\epsilon$ is rejected.

\emph{(iii) Bounded degradation.} Acceptance implies $\hat J(\tilde\pi^{\,+}) \ge \hat J(\tilde\pi^{\,0})$. By $\epsilon$-accuracy:
\[
J_{R^*}(\tilde\pi^{\,+}) \ge \hat J(\tilde\pi^{\,+}) - \epsilon \ge \hat J(\tilde\pi^{\,0}) - \epsilon \ge J_{R^*}(\tilde\pi^{\,0}) - 2\epsilon. \qedhere
\]
\end{proof}

\paragraph{Error decomposition.}
When rollouts use the true dynamics ($\hat P = P$) and the number of rollouts $k \to \infty$ (or the environment is deterministic), the evaluator error decomposes as follows.

\begin{lemma}[Evaluator error bound]\label{lem:error-decomp}
Under the conditions above,
\[
|\hat J(\pi) - J_{R^*}(\pi)| \;\le\; \frac{1 - \gamma^n}{1 - \gamma}\,\epsilon_R \;+\; \gamma^n\,\epsilon_Q,
\]
where $\epsilon_R = \max_{s,a} |R_\psi(s,a) - R^*(s,a)|$ and $\epsilon_Q = \max_{s,a} |Q_\theta(s,a) - Q^\pi_{R^*}(s,a)|$, with $Q^\pi_{R^*}$ denoting the true action-value function of $\pi$ under $R^*$.
\end{lemma}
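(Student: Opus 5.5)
The approach is to write $J_{R^*}(\pi)$ in the same $n$-step bootstrapped form as $\hat J(\pi)$ and then compare the two term by term. Under the stated conditions ($\hat P = P$, and either $k\to\infty$ or deterministic dynamics), $\hat J(\pi)$ equals the exact expectation
\[
\hat J(\pi) = \mathbb{E}_{\rho,\pi,P}\Bigl[\sum_{t=0}^{n-1}\gamma^t R_\psi(s_t,a_t) + \gamma^n Q_\theta(s_n,a_n)\Bigr].
\]
This holds because the rollouts of \autoref{eq:J} are taken under the true dynamics with $a_t\sim\pi$, including the bootstrap action $a_n$. If $k$ is finite and the dynamics are deterministic, we additionally read $\rho$ as the empirical start distribution, which is the interpretation under which the lemma is claimed.

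The first key step is the identity
\[
J_{R^*}(\pi) = \mathbb{E}_{\rho,\pi,P}\Bigl[\sum_{t=0}^{n-1}\gamma^t R^*(s_t,a_t) + \gamma^n Q^\pi_{R^*}(s_n,a_n)\Bigr].
\]
I would obtain it from the tower property: split the infinite discounted sum at time $n$, then condition on $(s_n,a_n)$. The Markov property then gives $\mathbb{E}\bigl[\sum_{t\ge n}\gamma^{t} R^*(s_t,a_t)\mid s_n,a_n\bigr] = \gamma^n Q^\pi_{R^*}(s_n,a_n)$. This is equivalent to unrolling the Bellman equation for $Q^\pi_{R^*}$ $n$ times.

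Next, subtract this identity from the expression for $\hat J(\pi)$. Since both expectations are over the same trajectory distribution, the difference is a single expectation:
\[
\hat J(\pi) - J_{R^*}(\pi) = \mathbb{E}\Bigl[\sum_{t=0}^{n-1}\gamma^t\bigl(R_\psi - R^*\bigr)(s_t,a_t) + \gamma^n\bigl(Q_\theta - Q^\pi_{R^*}\bigr)(s_n,a_n)\Bigr].
\]
Taking absolute values, I would apply Jensen's inequality, then the triangle inequality, then the sup-norm definitions of $\epsilon_R$ and $\epsilon_Q$. This gives the bound $\sum_{t=0}^{n-1}\gamma^t\epsilon_R + \gamma^n\epsilon_Q$. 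The geometric sum equals $\frac{1-\gamma^n}{1-\gamma}$, which is the stated bound.

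The only step that needs care is the coupling. The estimator trajectory and the true-return trajectory must have the same law, so that the difference can be written as one expectation. This is exactly where $\hat P=P$, the on-policy choice of $a_n$, and the $k\to\infty$ (or determinism) condition are used. I would state these assumptions explicitly, along with integrability of the reward and value functions, for instance boundedness so that $Q^\pi_{R^*}$ is finite. Everything else is routine.
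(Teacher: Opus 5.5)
Your proposal is correct and follows the same argument as the paper. Both write $\hat J(\pi) - J_{R^*}(\pi)$ as a single expectation of per-step reward-model errors plus the discounted bootstrap error, then bound it with the triangle inequality and the geometric sum; you also spell out the $n$-step unrolling of $J_{R^*}$ through $Q^\pi_{R^*}$ and the coupling assumptions, which the paper leaves implicit under ``exact dynamics and expectations.''
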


\begin{proof}
Under exact dynamics and expectations,
\begin{align*}
\hat J(\pi) - J_{R^*}(\pi) &= \mathbb{E}_{\rho,\pi}\!\left[\sum_{t=0}^{n-1} \gamma^t \bigl(R_\psi(s_t,a_t) - R^*(s_t,a_t)\bigr) \;+\; \gamma^n \bigl(Q_\theta(s_n,a_n) - Q^\pi_{R^*}(s_n,a_n)\bigr)\right].
\end{align*}
Taking absolute values and applying the triangle inequality,
\[
|\hat J(\pi) - J_{R^*}(\pi)| \;\le\; \sum_{t=0}^{n-1}\gamma^t\,\epsilon_R + \gamma^n\,\epsilon_Q \;=\; \frac{1-\gamma^n}{1-\gamma}\,\epsilon_R + \gamma^n\,\epsilon_Q. \qedhere
\]
\end{proof}

Substituting into \Cref{prop:safety-permissiveness}, MCVL rejects a hacking-inducing transition whenever the true return drop $\delta = J_{R^*}(\tilde\pi^{\,0}) - J_{R^*}(\tilde\pi^{\,+})$ satisfies
\begin{equation}\label{eq:sufficient}
\delta \;>\; 2\!\left(\frac{1-\gamma^n}{1-\gamma}\,\epsilon_R + \gamma^n\,\epsilon_Q\right).
\end{equation}
This highlights the role of the scoring rollout steps~$n$: each additional step replaces reliance on the bootstrap $Q_\theta$ with a direct reward-model observation. The bound $g(n) = \frac{1-\gamma^n}{1-\gamma}\,\epsilon_R + \gamma^n\,\epsilon_Q$ is monotonically decreasing in $n$ when $\epsilon_Q > \frac{\epsilon_R}{1-\gamma}$, which we expect in practice since $R_\psi$ directly observes the forecasted policy's behavior during rollouts while $Q_\theta$ was not trained on the forecasted policy. When $\hat P$ is a learned model, compounding prediction error provides a practical upper bound on useful $n$; in episodic tasks with a simulator, the episode length is a natural choice. Successful filtering keeps the buffer free of hacking transitions, helping maintain small $\epsilon_R$ and $\epsilon_Q$ over time.

\paragraph{Why forecasting improves detection.}
MCVL scores each branch with $k$ rollouts of length $n$ under the frozen evaluator. Because evaluator parameters are frozen and $n$ is fixed, the variance of the per-rollout bootstrapped return is bounded by some $\sigma^2$ independently of the forecast budget $l$. Using paired rollouts with shared start states exploits positive covariance between the two branch estimates, so the variance of the estimator $\hat J_k(\tilde\pi^{\,0}) - \hat J_k(\tilde\pi^{\,+})$ is at most $2\sigma^2/k$, independent of $l$.

The expected scoring gap $\Delta(l) = \hat J(\tilde\pi^{\,0}_l) - \hat J(\tilde\pi^{\,+}_l)$ (positive when the hacking branch scores lower) in general grows with $l$: each additional forecast step gives the base learner more room to translate the hacking transition into a policy change the evaluator can distinguish. So increasing the forecast budget improves detection without increasing scoring cost.

At $l=1$ (the \emph{Each-step} variant), a single gradient step barely changes the policy, especially before the critic has begun to assign high value to the hacking behavior, so $\Delta(1) \approx 0$ and the check is unreliable. Increasing $l$ to the point where the hacking transition induces a meaningful policy divergence produces a gap the evaluator can reliably detect, explaining why the Each-step variant fails while moderate $l$ succeeds (\autoref{fig:tam_train_steps}).

\section{Feasibility of Occupancy-Regularized Objectives}
\label{apdx:orpo}
It would be trivial to show that regularizing to a safe policy either performs at the same level as the frozen safe policy (or reward hacks) by selecting a high (or low) regularization coefficient. Instead, we test whether an ORPO-style objective presented by \citep{laidlaw2023preventing} could, \emph{in principle}, select the desired behavior in our settings. For each gridworld environment we train DDQN Q-functions for \emph{Frozen} (safe, post-pretraining), \emph{Hacking} (trained on observed reward), and \emph{Oracle} (trained on true reward). From these Q-functions, we derive stochastic policies via (i) softmax over Q-values and (ii) $\epsilon$-greedy with $\epsilon=0.05$. We estimate occupancy measures and empirical on-policy discounted episodic returns under observed reward $J_\pi$ with 1000 rollouts. We compute ORPO objective $F(\pi,\pi_{\mathrm{Frozen}})=J(\pi)-\lambda D(\mu_{\pi}\Vert \mu_{\pi_{\mathrm{Frozen}}})$ for $D\in\{\mathrm{KL},\sqrt{\chi^2}\}$. We record the fraction of seeds (out of 10) where some $\lambda>0$ exists such that it satisfies \emph{both} $F(\pi_{\mathrm{Oracle}},\pi_{\mathrm{Frozen}}) > F(\pi_{\mathrm{Frozen}},\pi_{\mathrm{Frozen}})$ and $F(\pi_{\mathrm{Oracle}},\pi_{\mathrm{Frozen}}) > F(\pi_{\mathrm{Hacking}},\pi_{\mathrm{Frozen}})$. Let $J_O,J_F,J_H$ denote the observed returns of Oracle, Frozen, and Hacking policies; let $D_O=D(\mu_O\Vert\mu_F)$ and $D_H=D(\mu_H\Vert\mu_F)$. The first inequality gives $\lambda < \frac{J_O-J_F}{D_O}$ when $D_O>0$ (and requires $J_O>J_F$ when $D_O=0$). The second inequality is $\lambda(D_H-D_O) > J_H-J_O$, which yields three cases: if $D_H>D_O$, then $\lambda > \frac{J_H-J_O}{D_H-D_O}$; if $D_H<D_O$, then $\lambda < \frac{J_H-J_O}{D_H-D_O}$; if $D_H=D_O$, it requires $J_O>J_H$. A seed is feasible iff the resulting open interval intersects $(0,+\infty)$. Per-seed feasibility does not imply a single global $\lambda$ across seeds. We present results in \autoref{table:orpo}.

\begin{table}[h]
\centering
\small
\begin{tabular}{lcccccc}
\toprule
Policy & Divergence & Box Moving & Absent Supervisor & Tomato Watering & Rocks \& Diamonds \\
\midrule
Soft-Q         & $\sqrt{\chi^2}$ & 0\% & 100\% & 0\% & 0\% \\
Soft-Q         & KL       & 0\% & 100\% & 0\% & 0\% \\
$\epsilon$-greedy & $\sqrt{\chi^2}$ & 60\% & 40\% & 30\% & 0\% \\
$\epsilon$-greedy & KL       & 40\% & 50\% & 0\%  & 0\% \\
\bottomrule
\end{tabular}
\caption{Percentage of seeds (of 10) where a regularization weight $\lambda>0$ exists that ranks the Oracle policy above both Frozen and Hacking under an ORPO-like objective.}
\label{table:orpo}
\end{table}

In many cases, no such $\lambda$ exists, suggesting that occupancy regularization fails to suppress high-value hacks without also suppressing Oracle-like improvements. In contrast, MCVL achieves performance comparable to the Oracle across all these tasks.

\newpage
\bibliography{main}
\bibliographystyle{rlj}

\beginSupplementaryMaterials

\section{Additional Experiments} \label{apdx:experiments}
\begin{figure}[h]
	\centering
        \begin{subfigure}[t]{.49\linewidth}
		\centering\includegraphics[width=1\linewidth]{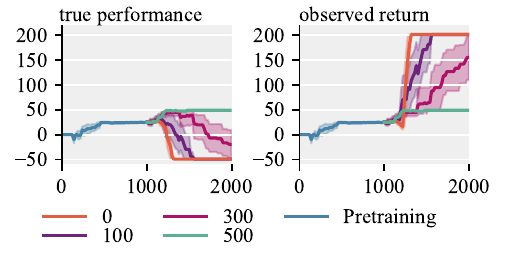}
		\vspace*{-7mm}
		\caption{Forecasting training steps}
		\label{fig:tam_train_steps}
	\end{subfigure}\hfill
	\begin{subfigure}[t]{.49\linewidth}
		\centering\includegraphics[width=1\linewidth]{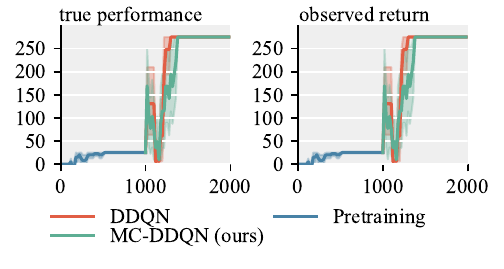}
		\vspace*{-7mm}
		\caption{Box Moving \emph{No-Hack}}
		\label{fig:abl_box_notam}
	\end{subfigure}
 
	\begin{subfigure}[t]{.49\linewidth}
		\centering\includegraphics[width=1\linewidth]{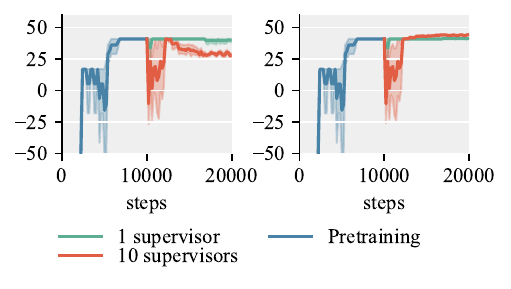}
		\vspace*{-7mm}
		\caption{Number of supervisors}
		\label{fig:num_supervisors}
	\end{subfigure} \hfill
	\begin{subfigure}[t]{.49\linewidth}
		\centering\includegraphics[width=1\linewidth]{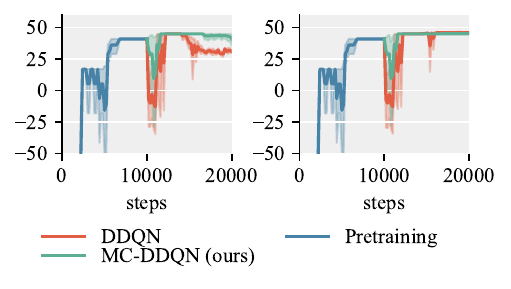}
		\vspace*{-7mm}
		\caption{Removing walls in \emph{Absent Supervisor}}
		\label{fig:no_walls}
	\end{subfigure} \hfill
	\caption{(a) Sensitivity to forecasting training steps $l$ in Box Moving. (b) Results in the \emph{No-Hack} version of Box Moving. (c) Varying the number of supervisors in Absent Supervisor. (d) A variant of Absent Supervisor where a shorter path becomes available in \emph{Full}. }
	\label{fig:addit-exps}
\end{figure}

In \autoref{fig:tam_train_steps}, we investigated the number of forecasting training steps $l$ needed to avoid undesired behavior in Box Moving. With an insufficient number of training steps, certain undesired transitions are not rejected, yet our algorithm still slows down the learning of reward hacking behavior.

In \autoref{fig:abl_box_notam}, we examine the behavior of MC-DDQN in the \emph{No-Hack} version of \emph{Box Moving} (\autoref{fig:box}). In this version, the agent receives a +5 reward on the top cell which does not interfere with moving the box upward. As anticipated, in this scenario our agent does not reject transitions and learns the optimal policy.

We also conducted experiments in \emph{Absent Supervisor}, varying the number of supervisors. In \autoref{fig:num_supervisors}, increasing the number of supervisors from 1 to 10 leads to less consistent detection of transitions that induce reward hacking, despite the change being purely visual. Qualitative analysis revealed that our neural networks struggled to adapt to this distribution shift (all gridworld environments are encoded as multi-hot vectors), resulting in predicted rewards deviating significantly from the ground truth.

Furthermore, we explored the impact of removing two walls from \emph{Absent Supervisor} after training in \emph{Safe}. Without these two walls, a shorter path to the goal is available that bypasses the punishment cell, although going through the punishment cell remains faster. In \autoref{fig:no_walls}, it is evident that while our algorithm can learn a better policy that avoids the punishment cell, the rejection of reward hacking transitions becomes less reliable. This decline is attributed to the increased distribution shift between \emph{Safe} and \emph{Full}.

\section{Learned Transition Model}
\label{apdx:learned-model}
\begin{wrapfigure}{r}{0.5\textwidth}
    \centering
    \includegraphics[width=\linewidth]{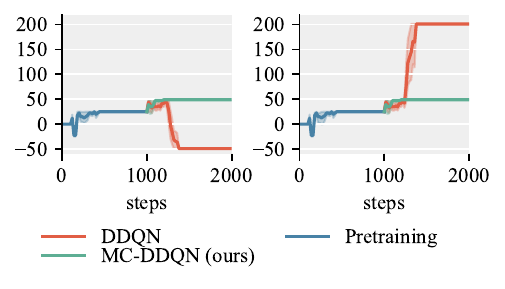}
    \caption{MC-DDQN with a learned transition model in \emph{Box Moving}.}
    \label{fig:learned-model}
\end{wrapfigure}

Scoring uses a transition model $\hat P$ solely to \emph{compare} two candidate policies under a frozen evaluator; exact dynamics are unnecessary as long as the evaluator continues to rank hacking trajectories below non-hacking ones. To verify that a learned model suffices, we train a forward dynamics model and use it in place of the environment during scoring rollouts. The model is a two-hidden-layer MLP with 128 units per layer, ReLU activations, and layer normalization. It takes the concatenation of the current observation and a one-hot encoded action as input and predicts the next observation. The model is trained with MSE loss using Adam (learning rate $10^{-2}$) on 50 episodes of random exploration data (1000 gradient steps, batch size 256) and frozen during deployment. We run MC-DDQN in \emph{Box Moving} with the same hyperparameters as in the main experiments, replacing only the transition source. As shown in \autoref{fig:learned-model}, MC-DDQN with the learned $\hat P$ avoids reward hacking and reaches Oracle performance, matching the result obtained with the true environment. This supports the claim that approximate dynamics suffice for reliable gating.

\section{Implementation Details of MC-DDQN} \label{apdx:full-alg}
\begin{algorithm}[H]
    \caption{Policy Forecasting} \label{alg:policy_forecasting} 
    \textbf{Input}: Set of transitions $T$, replay buffer $D$, current Q-network parameters $\theta$, training steps $l$ \\
    \textbf{Output}: Forecasted policy $\pi_f$
    \begin{algorithmic}[1]
        \State $\theta_f \leftarrow \Call{Copy}{\theta}$ \Comment{Copy current Q-network parameters}
        \For{training step $t = 1$ to $l$}
            \State Sample random mini-batch $B$ of transitions from $D$
            \State $\theta_f \leftarrow \Call{TrainDDQN}{\theta_f, B \cup T}$ \Comment{$T$ is added to each batch for deterministic environments}
        \EndFor
        \State \Return $\pi_f(s) = \argmax_a Q_{\theta_f}(s, a)$ \Comment{Return forecasted policy}
    \end{algorithmic}
\end{algorithm}

\begin{algorithm}[H]
    \caption{Scoring} \label{alg:utility_estimation}
    \textbf{Input}: Policy $\pi$, transition model $\hat P$, return estimator parameters $\theta$ and $\psi$, initial states $\rho$, rollout steps $n$, number of rollouts $k$ \\
    \textbf{Output}: Estimated bootstrapped return of the policy $\pi$
    \begin{algorithmic}[1]
        \For{rollout $r = 1$ to $k$}
            \State $g \leftarrow 0$ \Comment{Initialize return for this rollout}
            \State $s_0 \sim \rho$ \Comment{Sample an initial state}
            \State $a_0 \leftarrow \pi(s_0)$ \Comment{Get action from policy}
            \For{step $t = 0$ to $n-1$}
                \State $g \leftarrow g + \gamma^t R_\psi(s_t, a_t) $ \Comment{Accumulate predicted reward}
                \State $s_{t+1} \sim \hat P(s_t, a_t)$ \Comment{Sample next state from transition model}
                \State $a_{t+1} \leftarrow \pi(s_{t+1})$  \Comment{Get action from policy}
            \EndFor
            \State $g \leftarrow g + \gamma^nQ_\theta(s_n, a_n)$ \Comment{Add final Q-value}
        \EndFor
        \State \Return $\frac{1}{k} \sum_{r=1}^{k} g$ \Comment{Return average return over rollouts}
    \end{algorithmic}
\end{algorithm}

\begin{algorithm}[H]
    \caption{Modification-Considering Double Deep Q-learning (MC-DDQN)} \label{alg:mc_ddqn}
    \textbf{Input}: Pretrained return estimator parameters $\theta$ and $\psi$, replay buffer $D$, transition model $\hat P$, initial states $\rho$, rollout steps $n$, number of rollouts $k$, forecasting train steps $l$, threshold $\delta_r$. \\
    \textbf{Output}: Trained Q-network and reward model
    \begin{algorithmic}[1]
        \State Initialize state $s_0$
        \For{time step $t = 0$ to end of training}
            \State $a_t \leftarrow$ \Call{$\epsilon$-greedy}{$\argmax_a Q_\theta(s_t, a)$}
            \State Execute action $a_t$, observe reward $r_t$, and transition to state $s_{t+1}$
            \State $T_t \leftarrow (s_t, a_t, r_t, s_{t+1})$
            \If{$\lvert r_t - R_\psi(s_t,a_t)\rvert < \delta_r$}
                \State $\mathit{accept} \leftarrow \textbf{True}$
            \Else
                \State $\tilde\pi^{\,+} \leftarrow$ \Call{Forecast}{$\{T_{t}\}, D, \theta, l$} \Comment{Forecast a policy with new transition}
                \State $\tilde\pi^{\,0} \leftarrow$ \Call{Forecast}{$\{\}, D, \theta, l$} \Comment{Forecast a policy without new transition}
                \State $J_{\tilde\pi^{\,+}} \leftarrow$ \Call{Score}{$\tilde\pi^{\,+}, \hat P, \theta, \psi, \rho, n, k$} \Comment{Estimate n-step bootstrapped return for $\tilde\pi^{\,+}$}
                \State $J_{\tilde\pi^{\,0}} \leftarrow$ \Call{Score}{$\tilde\pi^{\,0}, \hat P, \theta, \psi, \rho, n, k$} \Comment{Estimate n-step bootstrapped return for $\tilde\pi^{\,0}$}
                \State $\mathit{accept} \leftarrow (J_{\tilde\pi^{\,+}} \geq J_{\tilde\pi^{\,0}})$ \Comment{Accept if $\tilde\pi^{\,+}$ is not worse by current estimator}
            \EndIf
            \If{$\mathit{accept}$}
                \State Store transition $T_t$ in $D$
            \Else
                \State Reset the environment \Comment{Without $T_t$ in $D$ future forcasted policies might fail to learn.}
            \EndIf
            \State Sample random mini-batch $B$ of transitions from $D$
            \State $\theta \leftarrow$ \Call{TrainDDQN}{$\theta, B$} \Comment{Update Q-network}
            \State $\psi \leftarrow$ \Call{Train}{$\psi, B$} \Comment{Update reward model using $L_2$ loss}
            \State $s_t \leftarrow s_{t+1}$
        \EndFor
    \end{algorithmic}
\end{algorithm}

\section{Implementation Details of MC-TD3} \label{apdx:mctd3}
Our implementation is based on the implementation provided by~\citet{huang2022cleanrl}. The overall structure of the algorithm is consistent with MC-DDQN, described in \autoref{apdx:full-alg}, with key differences outlined below. TD3 is an actor-critic algorithm, meaning that the parameters \(\theta\) define both a policy (actor) and a Q-function (critic). In \autoref{alg:policy_forecasting} and \autoref{alg:mc_ddqn}, calls to \textsc{TrainDDQN} are replaced with \textsc{TrainTD3}, which updates the actor and critic parameters \(\theta\) as specified by \citet{fujimoto2018addressing}. Additionally, in \autoref{alg:policy_forecasting}, the returned policy \(\pi_f(s)\) corresponds to the actor rather than \(\arg\max_a Q_\theta(s, a)\), and in \autoref{alg:mc_ddqn} the action executed in the environment is also selected by the actor.

\paragraph{Forecast implementation detail.}
In \autoref{alg:policy_forecasting}, the transition set $T$ is added to each forecast minibatch ($B \cup T$) rather than relying on stochastic replay sampling from $D \cup T$. This is an implementation choice to reduce variance in the branch comparison by ensuring the checked transition is represented in the ``with-$T$'' branch under a fixed update budget. The theoretical analysis is unchanged, because it only assumes an abstract $\mathsf{Forecast}(\cdot,l)$ operator.

\section{Details of the Continuous-Control Experiments} \label{apdx:reacher}

All three continuous-control environments follow the standard procedure where the policy and critics are first trained on transitions collected by a random policy. No \emph{Safe} variant is required because random exploration is unlikely to simultaneously satisfy the velocity and physical-configuration conditions that trigger hacking or keep the arm in place for multiple frames in Reacher. This demonstrates that MCVL can function effectively without a separate safe environment whenever the hacking affordance is difficult to discover by chance.

\paragraph{Reacher.}
The rewards in the original Reacher-v5 environment are calculated as the sum of the negative distance to the target and the negative joint actuation strength. This reward structure encourages the robotic arm to reach the target while minimizing large, energy-intensive actions. The target's position is randomized at the start of each episode, and random noise is added to the joint rotations and velocities. Observations include the angles and angular velocities of each joint, the target's coordinates, and the difference between the target's coordinates and the coordinates of the arm's end. Actions consist of torques applied to the joints, and each episode is truncated after 50 steps.

We modified the environment by introducing a +50 reward when the arm's end remains within a small, fixed region for 15 consecutive steps. This region remains unchanged across episodes, simulating a scenario where the robot can tamper with its reward function by holding a button, but such behavior is difficult to discover. In our setup, a reward-tampering policy is highly unlikely to emerge through random actions and is typically discovered only when the target happens to be inside the reward-tampering region.

In our experiments, TD3 consistently discovers the hack when the goal in one of the episodes falls within the hacking region. MC-TD3 successfully filters out such transitions.

\paragraph{Ant.}
The standard Ant-v5 environment rewards forward movement (velocity in the $x$-direction), a healthy-posture bonus, and penalizes control effort and contact forces. We augment it with a sensor-malfunction model motivated by real-world IMU failures in legged robots: when the total contact-force norm exceeds a threshold (400) and the agent's forward velocity is above a minimum (3.5), the reward is augmented with a bonus equal to 10 times the contact force. This models the self-reinforcing loop in which aggressive gaits produce high ground-reaction forces that saturate accelerometers, causing inflated velocity readings that feed back into the reward. The performance metric is the reward \emph{before} the sensor-corruption bonus is applied. Random exploration is unlikely to simultaneously achieve the required high velocity and high contact forces.

In our experiments TD3 learner with this bonus reward achieved lower evaluation return compared to Oracle that did not receive the bonus reward (\Cref{fig:main-ant}). We hypothesize that pursuing transitions that achieved high contact forces in the past is brittle and doesn't lead to stable policies.

\paragraph{HalfCheetah.}
The standard HalfCheetah-v5 environment rewards forward velocity and penalizes control effort. We augment it with an optical-flow sensor failure model: when the front tip of the cheetah drops below a height threshold (-0.23) while the forward velocity exceeds a minimum (6), 1 reward bonus is added for each millimeter of the tip's negative height. This models the velocity over-estimation that occurs when a downward-facing optical-flow sensor operates too close to the ground, where proximity saturation and height-division error compound. The performance metric excludes this bonus. Random exploration rarely produces the combination of high forward speed and low front-tip height needed to trigger the hack.

In HalfCheetah, MC-TD3's mean falls slightly below the Oracle though within the 95\% CI, likely because of occasional over-rejection of legitimate transitions or due to environment resets when hacking is discovered by exploration.

\section{Frozen Reward Model Baseline} \label{apdx:freezerm}

To isolate the contribution of the forecast-and-score gate from that of merely having a learned reward model, we evaluate a \emph{Freeze RM} baseline. Unlike the \emph{Frozen} baseline of \autoref{fig:main-results}, which stops policy and critic learning after pretraining, here only the reward model is frozen: the policy and critic continue to train, but bootstrap on the frozen pretrained reward model $R_\psi$ instead of the observed proxy reward, and \emph{every} transition is admitted (no gate). The reward model is frozen at the end of pretraining (for continuous control, at the switch from random exploration to the learned policy). All other components (pretrained initialization, replay buffer, and hyperparameters) are identical to MCVL (MC-DDQN in the gridworlds, MC-TD3 in continuous control). Freezing $R_\psi$ is necessary because, without a gate, a live reward model would simply fit the hacking bonus and the baseline would collapse to the base learner.

\begin{figure*}[t]
	\centering
    \captionsetup[subfigure]{justification=centering}
	\begin{subfigure}[t]{.142\linewidth}
		\centering\includegraphics[width=1\textwidth,trim=2mm 0mm 2mm 0mm, clip]{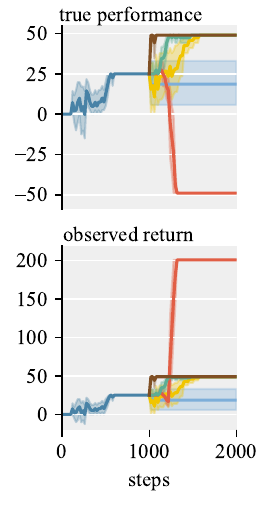}
		\vspace*{-6mm}
		\caption{Box \\ Moving}
	\end{subfigure}\hfill
    \hspace{-4mm}
	\begin{subfigure}[t]{.145\linewidth}
		\centering\includegraphics[width=1\textwidth,trim=2mm 0mm 2mm 0mm, clip]{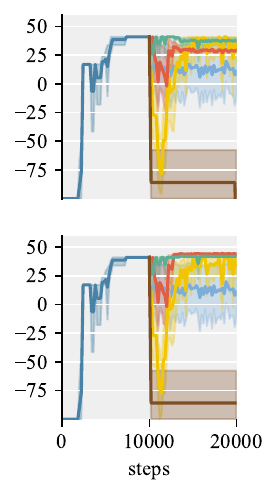}
		\vspace*{-6mm}
		\caption{Absent \\ Supervisor}
	\end{subfigure}\hfill
    \hspace{-4mm}
	\begin{subfigure}[t]{.138\linewidth}
		\centering\includegraphics[width=1\textwidth,trim=2mm 0mm 2mm 0mm, clip]{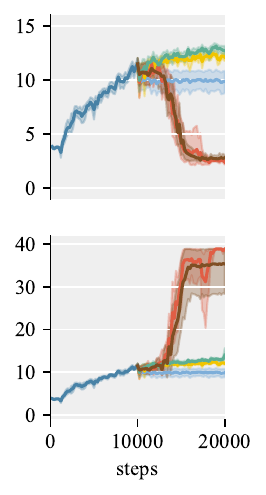}
		\vspace*{-6mm}
		\caption{Tomato \\ Watering}
	\end{subfigure}\hfill
    \hspace{-4mm}
	\begin{subfigure}[t]{.150\linewidth}
		\centering\includegraphics[width=1\textwidth,trim=2mm 0mm 2mm 0mm, clip]{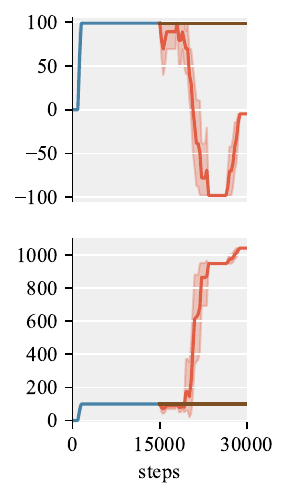}
		\vspace*{-6mm}
		\caption{Rocks \& \\ Diamonds}
	\end{subfigure}\hfill
    \hspace{-4mm}
	\begin{subfigure}[t]{.148\linewidth}
		\centering\includegraphics[width=1\textwidth,trim=0mm 0mm 0mm 0mm, clip]{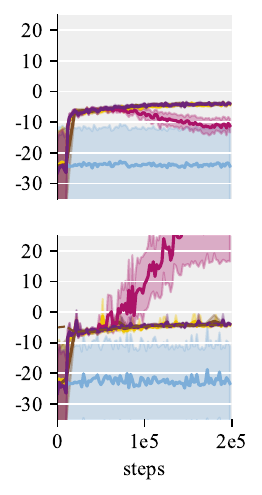}
		\vspace*{-6mm}
		\caption{Reacher}
	\end{subfigure}\hfill
    \hspace{-4mm}
	\begin{subfigure}[t]{.153\linewidth}
		\centering\includegraphics[width=1\textwidth,trim=0mm 0mm 0mm 0mm, clip]{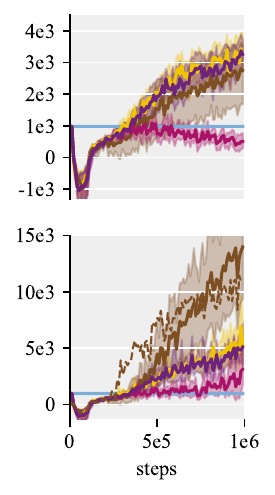}
		\vspace*{-6mm}
		\caption{Ant}
	\end{subfigure}\hfill
        \hspace{-4mm}
	\begin{subfigure}[t]{.154\linewidth}
		\centering\includegraphics[width=1\textwidth,trim=0mm 0mm 0mm 0mm, clip]{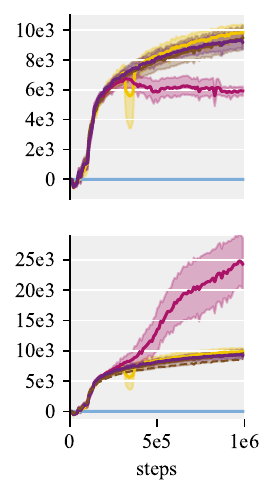}
		\vspace*{-6mm}
		\caption{HalfCheetah}
	\end{subfigure}
	\begin{subfigure}[t]{1\linewidth}
		\centering\includegraphics[width=0.95\linewidth,trim=0mm 0mm 0mm 0mm, clip]{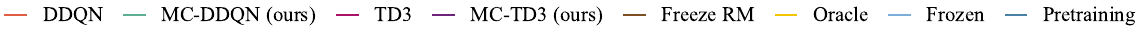}
	\end{subfigure}
	\caption{\textbf{Freeze RM baseline across all environments.} Top: true performance (unobserved objective); bottom: observed return (proxy). Same format as \autoref{fig:main-results}, with the Freeze RM baseline (brown) added. The dashed brown line in the continuous-control panels is the reward model's own predicted return for Freeze RM (the quantity it maximizes). Bold lines: mean over 10 seeds; bands: bootstrapped 95\% CI.}
	\label{fig:freezerm}
\end{figure*}

\autoref{fig:freezerm} reports Freeze RM against MCVL and the Oracle over 10 seeds per environment. On the \emph{true} objective, Freeze RM matches the Oracle in five of the seven tasks (Box Moving, Rocks \& Diamonds, Reacher, Ant, and HalfCheetah): freezing the reward model immediately after pretraining is, perhaps surprisingly, enough to recover Oracle-level performance. The two exceptions show that the gate can offer an advantage in some environments. In \emph{Absent Supervisor}, Freeze RM's true performance collapses far below the Oracle (paired $t$-test vs.\ MC-DDQN, $p<10^{-4}$; $9/10$ seeds fall below the worst Oracle seed), whereas MC-DDQN stays at Oracle level. The observed return also stays low, which indicates that the reward model is not accurately capturing the true reward. In \emph{Tomato Watering}, Freeze RM exhibits the classic hacking signature, with the observed return inflating well above the achievable true return while true performance drops; however, MCVL's mitigation in this environment relies on a non-delusional transition model used by the gate, and since Freeze RM has no gate, the comparison conflates the gate with the world model and cannot be made fair.

Within continuous control, we observe a high proxy return for Freeze RM in the Ant environment. It is significantly higher than MC-TD3's (paired $t$-test, $p<0.01$; ${\approx}2.7\times$), and its reward-model-predicted return (dashed) tracks the inflated proxy rather than the true performance. In the single seed where this is most severe, the policy exploits errors in the frozen $R_\psi$, driving the predicted return far above any achievable true return while true performance collapses; because only one seed is affected, the effect on mean true performance is small and not significant. Freeze RM is also somewhat more failure-prone than MCVL, with more seeds whose converged true performance falls below the worst Oracle seed (Ant $3$ vs.\ $2$, HalfCheetah $3$ vs.\ $1$).

Overall, the gate's necessity is environment-dependent. Where a reward model fit on the pretraining data already captures the true reward, bootstrapping on it without a gate suffices; where it does not, as in \emph{Absent Supervisor}, removing the gate leads to reward hacking and collapse. We therefore expect the gate's contribution to grow in settings where a small pretraining set cannot capture the true reward.

\section{Qualitative Observations} \label{apdx:qualitative}
During preliminary experiments, we encountered instances where the algorithm failed to reject transitions that induce reward hacking. Here we describe these occurrences and how they can be addressed.

\paragraph{Return estimation rollout steps.} When using much smaller rollout steps $n$, we noticed that during evaluation of forecasted trajectories, the non-hacking policy sometimes needed to traverse several states with low rewards to reach a high-reward region. In such cases, the reward hacking policy, which remained stationary, had a higher estimated utility. Increasing $n$ resolved this issue.

\paragraph{Forecasting without a counterfactual.} Initially, we forecasted only one future policy by training with the checked transition added to each mini-batch, and compared the resulting policy to the current one. However, in some cases this led to situations where the copy learned better non-hacking behaviors than the current policy simply because it was trained for longer. The solution was to forecast two policies, one with the checked transition added to each mini-batch and one without.

\paragraph{Sensitivity to stochasticity.} Evaluations in stochastic environments were noisy. To mitigate this, we compared the two policies starting from the same set of states and using the same random seeds of the transition model. We also kept the random seeds fixed while sampling mini-batches.

\paragraph{Handling rejected transitions.} We observed that if a hacking-inducing transition was removed from the replay buffer and another such transition occurred in the same episode, the algorithm sometimes failed to detect it the second time because there was no set of transitions in the buffer connecting this second transition to the starting state. We therefore reset the environment on every rejection, as shown in \autoref{alg:mcvl}. In practical applications, it would be reasonable to assume that after detecting potential reward hacking, the agent would be returned to a safe state instead of continuing exploration. Alternatively, the learning can be just disabled until the end of the episode.
 
\paragraph{Irreversible changes.} In \emph{Rocks and Diamonds}, when comparing policies starting from the current state after the rock was pushed into the goal area, the comparison results were always the same, as it was impossible to move the rock out of the goal area. We addressed this by evaluating from the initial state of the environment. In cases where reset is not possible, the agent may store starting states in a buffer. This issue underscores the importance of future research into avoiding irreversible changes.

\section{Computational Requirements} \label{apdx:compute}
All gridworld experiments were conducted on workstations equipped with Intel\textsuperscript{\textregistered} Core\texttrademark i9-13900K processors and NVIDIA\textsuperscript{\textregistered} GeForce RTX\texttrademark4090 GPUs. Continuous control experiments were conducted on NVIDIA\textsuperscript{\textregistered} H100 MIG partitions (1/8 GPU, 8 vCPU cores). All experiments in the \emph{Absent Supervisor} and \emph{Tomato Watering} environments each required 12-14 GPU-hours, running 10 seeds in parallel. In \emph{Rocks and Diamonds}, experiments took 1 GPU-day, while in \emph{Box Moving} they required 2 hours each. For the continuous-control environments (1M training steps for Ant and HalfCheetah, 200k for Reacher), average per-seed wall-clock times on an H100 MIG partition were: Reacher TD3 75 min / MC-TD3 135 min ($1.8\times$), Ant TD3 68 min / MC-TD3 206 min ($3.0\times$), HalfCheetah TD3 65 min / MC-TD3 272 min ($4.2\times$). The variation in overhead across environments depends primarily on how frequently the reward-discrepancy check is triggered. In total, the main experiments described in \Cref{sec:experiments} required approximately 20 GPU-days, including around 6 GPU-days for baselines.

\section{Hyperparameters of MC-DDQN} \label{apdx:hyperparams}
All hyperparameters are listed in \autoref{table:hyperparams}. Our algorithm introduces several additional hyperparameters beyond those typically used by standard RL algorithms:

\begin{table}[H]
\centering
\caption{Hyperparameters used for the experiments.}
\label{table:hyperparams}
\rowcolors{2}{white}{gray!15}
\begin{tabular}{ p{7cm} p{1.5cm}  }
\toprule
\textbf{Hyperparameter Name}           & \textbf{Value} \\
$Q_\theta$ and $R_\psi$ hidden layers   & 2              \\
$Q_\theta$ and $R_\psi$ hidden layer size  & 128            \\
$Q_\theta$ and $R_\psi$ activation function  & ReLU            \\
$Q_\theta$ and $R_\psi$ optimizer  & Adam            \\
$Q_\theta$ learning rate  & 0.0001            \\
$R_\psi$ learning rate  & 0.01            \\
$Q_\theta$ loss                         & SmoothL1        \\
$R_\psi$ loss                      & $L_2$            \\
Batch size                             & 32             \\
Discount factor $\gamma$               & 0.95           \\
Training steps on \emph{Safe}        & 10000          \\
Training steps on \emph{Full}        & 10000          \\
Replay buffer size                     & 10000          \\
Exploration steps                      & 1000           \\
Exploration $\epsilon_\emph{start}$  & 1.0            \\
Exploration $\epsilon_\emph{end}$    & 0.05           \\
Target network EMA coefficient             & 0.005          \\
Forecasting training steps $l$     & 5000           \\
Scoring rollout steps $n$      & 30             \\
Number of scoring rollouts $k$ & 20             \\
Predicted reward difference threshold $\delta_r$      & 0.05             \\
Add transitions from transition model  & False           \\
\bottomrule
\end{tabular}
\vspace{2mm}
\end{table}

\paragraph{Reward model architecture and learning rate.} Hyperparameters specify the architecture and learning rate of the reward model $R_\psi$. Since learning a reward model is a supervised learning task, these hyperparameters can be tuned on a dataset of transitions collected by any policy. The reward model architecture may be chosen to match the Q-function $Q_\theta$.

\paragraph{Forecasting training steps $l$.} This parameter describes the number of updates to the Q-function needed to predict the future policy based on a new transition. As shown in \autoref{fig:tam_train_steps}, this value must be sufficiently large to update the learned values and corresponding policy. It can be selected by artificially adding a transition that alters the optimal policy and observing the number of training steps required to learn the new policy.

\paragraph{Scoring rollout steps $n$.} This parameter controls the length of the trajectories used to compare two forecasted policies. Longer trajectories provide more direct behavioral information about each policy and reduce reliance on the value-function bootstrap. The error decomposition in \Cref{apdx:proofs} confirms that, under expected conditions ($\epsilon_Q > \frac{\epsilon_R}{1-\gamma}$), the evaluator bound is monotonically decreasing in~$n$, favoring longer rollouts. In episodic tasks with access to a simulator, a safe choice is the maximum episode length; in continuing tasks, a truncation horizon typically used in training may be suitable. When using a learned transition model, compounding prediction error provides a practical upper bound on useful~$n$. Computational costs scale linearly in~$n$ and can be reduced by choosing a smaller value based on domain knowledge.

\paragraph{Number of scoring rollouts $k$.} This parameter specifies the number of trajectories obtained by rolling out each forecasted policy for comparison. The required number depends on the stochasticity of the environment and policies. If both the policy and environment are deterministic, $k$ can be set to 1. Otherwise, $k$ can be selected using domain knowledge or by measuring how much rollouts are required to distinguish between multiple known policies with different behaviors.

\paragraph{Predicted reward difference threshold $\delta_r$.} This threshold defines the minimum difference between the predicted and observed rewards for a transition to trigger a check. As discussed in \Cref{sec:sensitivity}, this parameter is introduced only to reduce computations and can be set to 0. However, it can be adjusted based on domain knowledge to speed up training by minimizing unnecessary checks. The key requirement is that any reward hacking behavior must increase the reward by more than this threshold relative to the reward predicted by the reward model. In our gridworld and Reacher experiments, 0.05 performed well when rewards were normalized to $[-1, 1]$. For Ant and HalfCheetah, where the reward scale is larger, we use $\delta_r{=}2.0$.

\subsection{Environment-specific Parameters}

The training steps in \emph{Box Moving} were reduced to speed up training. \emph{Tomato Watering} has many stochastic transitions because each tomato has a chance of drying out at each step. To increase the robustness of evaluations, we increased the number of scoring rollouts $k$. \emph{Rocks and Diamonds} required more steps to converge to the optimal policy. Additionally, using the transition model to collect fresh data while forecasting in \emph{Rocks and Diamonds} makes reward hacking detection more reliable. Each gridworld environment's rewards were scaled to $[-1, 1]$.
\begin{table}[H]
\centering
\caption{Environment-specific hyperparameter overrides.}
\label{table:env-hyperparams}
\rowcolors{2}{white}{gray!15}
\begin{tabular}{ p{7cm} p{1.5cm}  }
\toprule
\textbf{Hyperparameter Name}       & \textbf{Value} \\
\midrule
\rowcolor{white}
\multicolumn{2}{c}{Box Moving} \\
\midrule
Training steps on \emph{Safe}    & 1000           \\
Training steps on \emph{Full}    & 1000           \\
Replay buffer size                 & 1000           \\
Exploration steps                  & 100            \\
Forecasting training steps $l$ & 500            \\
\rowcolor{white}
\midrule
\multicolumn{2}{c}{Absent Supervisor} \\
\midrule
Number of supervisors                  & 1          \\
Remove walls                           & False      \\
\rowcolor{white}
\midrule
\multicolumn{2}{c}{Tomato Watering} \\
\midrule
Number of scoring rollouts $k$ & 100        \\
\rowcolor{white}
\midrule
\multicolumn{2}{c}{Rocks and Diamonds} \\
\midrule
Training steps on \emph{Safe}        & 15000      \\
Training steps on \emph{Full}        & 15000      \\
Forecasting training steps $l$     & 7500      \\
Add transitions from transition model  & True      \\
\bottomrule
\end{tabular}
\vspace{2mm}
\end{table}

\subsection{Hyperparameters of MC-TD3}

\begin{table}[H]
\centering
\caption{Hyperparameters used for the MC-TD3 experiments (Reacher, Ant, HalfCheetah).}
\label{table:td3-hyperparams}
\rowcolors{2}{white}{gray!15}
\begin{tabular}{ p{7cm} p{1.5cm}  }
\toprule
\textbf{Hyperparameter Name}           & \textbf{Value} \\
Actor, critic, and reward model hidden layers   & 2              \\
Actor, critic, and reward model hidden layer size  & 256            \\
Actor, critic, and reward model activation function  & ReLU            \\
Actor, critic, and reward model optimizer  & Adam            \\
Actor and critic learning rate  & 0.0003            \\
$R_\psi$ learning rate  & 0.003            \\
Batch size                             & 256             \\
Discount factor $\gamma$               & 0.99           \\
Training steps       & 200000          \\
Replay buffer size                     & 200000          \\
Exploration steps                      & 30000           \\
Target networks EMA coefficient             & 0.005          \\
Policy noise  & 0.01 \\
Exploration noise & 0.1 \\
Policy update frequency & 2 \\
Forecasting training steps $l$     & 10000           \\
Scoring rollout steps $n$      & 50             \\
Number of scoring rollouts $k$ & 100             \\
Predicted reward difference threshold $\delta_r$      & 0.05             \\
\bottomrule
\end{tabular}
\vspace{2mm}
\end{table}

We did not perform extensive hyperparameter tuning; most hyperparameters are inherited from the implementation provided by~\citet{huang2022cleanrl}. The same MC-TD3 hyperparameters are used for all three continuous-control environments (Reacher, Ant, HalfCheetah), with the following environment-specific overrides:

\begin{table}[H]
\centering
\caption{Environment-specific hyperparameter overrides for MC-TD3.}
\label{table:td3-env-hyperparams}
\rowcolors{2}{white}{gray!15}
\begin{tabular}{ p{7cm} p{1.5cm}  }
\toprule
\textbf{Hyperparameter Name}       & \textbf{Value} \\
\midrule
\rowcolor{white}
\multicolumn{2}{c}{Ant \& HalfCheetah} \\
\midrule
Training steps       & 1000000          \\
Replay buffer size                     & 1000000          \\
Exploration steps                      & 100000           \\
Scoring rollout steps $n$      & 1000             \\
Predicted reward difference threshold $\delta_r$ & 2.0 \\
\bottomrule
\end{tabular}
\vspace{2mm}
\end{table}
Ant and HalfCheetah require more training steps (1M vs.\ 200k) because TD3 does not converge within 200k steps in these environments. The exploration budget is scaled proportionally. The reward difference threshold $\delta_r$ is increased from 0.05 to 2.0 to match the larger reward scale. The scoring rollout steps $n$ is set to the full episode length (1000).

\end{document}